\begin{document}

\title[Sampling with Adaptive Variance for Multimodal Distributions]{Sampling with Adaptive Variance for Multimodal Distributions}

\author[1]{\fnm{Bj\"orn} \sur{Engquist}}\email{engquist@oden.utexas.edu}
\author[2]{\fnm{Kui} \sur{Ren}}\email{kr2002@columbia.edu}
\author*[3]{\fnm{Yunan} \sur{Yang}}\email{yunan.yang@cornell.edu}

\affil[1]{\orgdiv{Department of Mathematics and the Oden Institute}, \orgname{The University of Texas at Austin}, \orgaddress{\street{2515 Speedway}, \city{Austin}, \postcode{78712}, \state{TX}, \country{USA}}}
\affil[2]{\orgdiv{Department of Applied Physics and Applied Mathematics}, \orgname{Columbia University}, \orgaddress{\street{500 W 120th St}, \city{New York}, \postcode{10027}, \state{NY}, \country{USA}}}
\affil*[3]{\orgdiv{Department of Mathematics}, \orgname{Cornell University}, \orgaddress{\street{212 Garden Ave}, \city{Ithaca}, \postcode{14850}, \state{NY}, \country{USA}}}

\abstract{We propose and analyze a class of adaptive sampling algorithms for multimodal distributions on a bounded domain, which share a structural resemblance to the classic overdamped Langevin dynamics. We first demonstrate that this class of linear dynamics with adaptive diffusion coefficients and vector fields can be interpreted and analyzed as weighted Wasserstein gradient flows of the Kullback--Leibler (KL) divergence between the current distribution and the target Gibbs distribution, which directly leads to the exponential convergence of both the KL and $\chi^2$ divergences, with rates depending on the weighted Wasserstein metric and the Gibbs potential. We then show that a derivative-free version of the dynamics can be used for sampling without gradient information of the Gibbs potential and that for Gibbs distributions with nonconvex potentials, this approach could achieve significantly faster convergence than the classical overdamped Langevin dynamics. A comparison of the mean transition times between local minima of a nonconvex potential further highlights the better efficiency of the derivative-free dynamics in sampling.
}

\keywords{
    adaptive sampling, overdamped Langevin dynamics, adaptive variance, derivative-free sampling, Gibbs distribution
}
\pacs[MSC Classification]{62D05,49Q22,65K10,49J40}

\maketitle

\section{Introduction}

Sampling algorithms are fundamental tools in statistics, machine learning, and various scientific disciplines. They enable us to draw samples from complex probability distributions, crucial for tasks ranging from Bayesian inference~\cite{welling2011bayesian} to generative modeling~\cite{song2021scorebased}.  Methods, such as Markov Chain Monte Carlo (MCMC)~\cite{brooks2011handbook}, are widely used for this purpose. In recent years, a significant improvement has occurred with the introduction of techniques rooted in optimal transport theory~\cite{Villani-Book03}, particularly those leveraging the Wasserstein geometry~\cite{jordan1998variational}, to analyze the non-asymptotic convergence behavior of various sampling algorithms.  The Wasserstein distance, a measure of discrepancy between probability distributions, provides a powerful framework for analyzing and optimizing the convergence properties of sampling algorithms, such as the Langevin Monte Carlo algorithm~\cite{dalalyan2017further}. This perspective has led to the development of novel algorithms that are more efficient and robust in high-dimensional settings~\cite{chen2020fast,chewi2021optimal,chen2022optimal,altschuler2023resolving}; see~\cite{chewi2024} for a thorough overview of the field.

The relationship between optimization and sampling has revealed deep connections that have advanced both fields~\cite{chewi2023optimization}. Central to this relationship is the Langevin diffusion, which bridges sampling and optimization by adding a stochastic component to the traditional gradient flow used for optimization. This stochasticity enables the algorithm to explore the entire optimization landscape, which then makes it possible to be used as a method for sampling from a distribution. The seminal work of Jordan, Kinderlehrer, and Otto (JKO)~\cite{jordan1998variational} highlighted that the Langevin diffusion could be interpreted as a gradient flow in the space of probability measures equipped with Wasserstein geometry, where the evolution of the marginal law aligns with the gradient flow of the Kullback--Leibler (KL) divergence. This perspective has inspired novel algorithms and facilitated the analysis of optimization techniques such as the proximal point method~\cite{chen2022improved}, Nesterov’s accelerated gradient method adapted to new sampling algorithms~\cite{chen2019hessian,ma2021there}, and many other variants~\cite{wang2022accelerated,chen2023accelerating}.  

However, current sampling algorithms, such as Langevin Monte Carlo, Hamiltonian Monte Carlo, Metropolis--Hastings adjusted sampling methods, face several notable challenges when dealing with non-log-concave distributions~\cite{lee2018beyond,chewi2024}. One primary limitation is the general infeasibility of polynomial-time guarantees for sampling from non-log-concave distributions in common metrics such as the Wasserstein metric~\cite{balasubramanian2022towards}. This contrasts sharply with log-concave distributions, where polynomial-time algorithms are more readily available across various metrics and divergences. For certain non-log-concave distributions, particularly those with multiple modes (e.g., Gaussian mixture models), the convergence of algorithms like Langevin Monte Carlo can be quite slow. These algorithms may require an exponentially long time to transition between modes, resulting in poor mixing. Additionally, the performance of algorithms for non-log-concave sampling is highly sensitive to the choice of initialization and step size. Improper choices can lead to sub-optimal performance or even failure to converge.

Current sampling algorithms often do not have  adaptive diffusion mechanisms. Traditional MCMC methods~\cite{dalalyan2017further,chen2020fast} employ fixed diffusion processes that do not adjust based on the geometry or structure of the target distribution. In simulated tempering~\cite{marinari1992simulated}, analogous to the simulated annealing~\cite{kirkpatrick1983optimization} in global optimization, the diffusion coefficient varies but only depending on time. This rigidity can lead to poor exploration and slow convergence in the presence of multimodal and non-log-concave distributions. As the field progresses, developing algorithms with adaptive diffusion capabilities is crucial for enhancing the performance and applicability of sampling methods.

In~\cite{engquist2022algebraically, engquist2024adaptive}, the authors investigated the benefits of a state-dependent diffusion coefficient for global optimization within the framework of both a stochastic gradient descent algorithm and a derivative-free algorithm, respectively. We proved asymptotic algebraic convergence of the proposed stochastic algorithms in terms of both the distance to the global minimizer and probability.  There are also other recent activities exploring the adaptivity of diffusion coefficient on the state for global optimization; see, for example ,~\cite {wojtowytsch2023stochastic}. While these works address the problem of finding the global minimizer of a highly nonconvex objective function, the encouraging analytical results motivate us to explore the advantages of state-dependent adaptive diffusion in sampling, particularly for non-log-concave distributions.

This work studies the Langevin Monte Carlo sampling method with a state-dependent adaptive diffusion term. The goal is to sample the Gibbs distribution
\begin{equation}\label{EQ:Gibbs}
    \pi_G(\bx)=Z_G^{-1} \exp\left(-\frac{F(\bx)}{\eps}\right),\quad Z_G:=\int_{\bbT^d} \exp\left(-\frac{F(\bx)}{\eps}\right)\, d\bx\,,
\end{equation}
where $F:\bbT^d \rightarrow \mathbb{R}$ is the Gibbs potential function, $\bbT^d$ is a $d$-dimensional torus, and $\eps$ is the standard scaling parameter controlling the flatness of the distribution. A general format of the  algorithm under study is
\begin{equation}\label{EQ:AdVar Disc}
    X_{n+1} = X_n -\eta_n\nabla H(X_n) + \sqrt{2\eta_n\, D(X_n)}\, \xi_n,
\end{equation}
where $X_n \in \bbT^d$ is  the current $n$-th step iterate, $H = h\circ F$ and $D = \sigma \circ F$ with some $h:\mathbb{R} \rightarrow \bbR$ and $\sigma:\mathbb{R} \rightarrow \bbR^+$, $\eta_n>0$ is the step size, and $\xi_n$ is a standard normal random variable.  

The main feature of the proposed iterative algorithm is the $F$-dependent diffusion coefficient, and the resulting different potential function $H:=h\circ F$ instead of $F$. This iteration can be thought of as the Euler--Maruyama discretization of the continuous overdamped Langevin dynamics with an $F$-dependent noise:
\begin{equation}\label{EQ:AdVar Cont}
dX_t = -\nabla H(X_t) dt+\sqrt{2D(X_t)} dW_t\,,
\end{equation}
where $dW_t$ is the standard Brownian motion. In the rest of the work, we will show that by selecting different $F$-dependent diffusion coefficients (and adjusting the potential $H$ accordingly), we can construct a class of algorithms that converges to the same Gibbs distribution with different convergence behaviors.

There have been extensive studies on the sampling algorithm~\eqref{EQ:AdVar Disc} and its continuous limit~\eqref{EQ:AdVar Cont} when $\sigma$ is a constant; see, for instance, \cite{dalalyan2017further,dalalyan2012sparse,DM19,ChChBaJo-PMLR18,VeWi-NIPS19}, for some classical and recent literature. Let $\pi_t$ be the probability law of the process $X_t$ at time $t$ and assume that a unique stationary distribution $\pi$ exists for the system (which often requires that the potential $F$ is such that the stationary distribution $\pi$ satisfies functional inequalities such as the Poincar\'{e} or the log-Sobolev inequalities). Then, a standard convergence result often takes the following form:
\begin{equation}\label{eq: expo converge}
    d(\pi_t, \pi) \le C e^{-\lambda t} d(\pi_0, \pi),\quad \lambda ,C>0\,,
\end{equation}
where $\pi_0$ is the initial distribution. Such results hold for some different choices of distances and divergences (see, for instance, \cite{LeNiPa-JSP13,Pavliotis-Book14,ChErLiShZh-FCM24,BaGeLe-Book24} and references therein) and can be used to estimate the computational complexity of the corresponding discrete sampling algorithm~\cite{ErHo-PMLR21}.

The SDE given in~\eqref{EQ:AdVar Cont} can be studied under the Eulerian framework based on the PDE analysis.
Let  $D(\bx):=\sigma (F(\bx))$ be the diffusion coefficient function, and $H(\bx):=h(F(\bx))$ be the ``effective'' potential generating the drift term. The infinitesimal generator of the semigroup $\cL$ is given as
\[
    \cL f :=-\nabla H \cdot \nabla f+ D \Delta f ,\ \ \ \forall f\in C^2(\bbT^d)\,.
\]
The Fokker--Planck equation corresponding to~\eqref{EQ:AdVar Cont} is
\begin{equation}\label{EQ:AdVar FPE}
    \partial_t \rho = \cL^* \rho:=\nabla\cdot (\rho \nabla H)+ \Delta (D \rho\big)=\nabla\cdot \Big(\rho \nabla H+ \nabla(D \rho)\Big)\,.
\end{equation}
For example, when $(h,\sigma)=(I,\eps)$, or equivalently, $(H(\bx),D(\bx))=(F(\bx),\eps)$, the Gibbs measure~\eqref{EQ:Gibbs} is a steady-state solution to~\eqref{EQ:AdVar FPE} on $\bbR^d$~\cite{HoKuSt-JFA89,Pavliotis-Book14}. 

Among the entire class of dynamics in the form of~\eqref{EQ:AdVar Cont} or \eqref{EQ:AdVar FPE},  we will pay special attention to the derivative-free case in which the drift term is zero (i.e. $H$ is a constant). In this scenario,  the stationary distribution of the problem is
\begin{equation*}
    \pi(\bx) = Z^{-1} \frac{1}{D(\bx)}\,,
\end{equation*}
assuming again that $D^{-1}$ is integrable on $\bbT^d$ and $Z = \int_{\bbT^d} D^{-1}(\bx)dx$ . One key point we aim to address in this work is that by selecting
\begin{equation}\label{EQ:Sigma Gibbs} 
    D(\bx) = \sigma(F(\bx)) \varpropto \exp\left(\frac{F(\bx)}{\eps}\right)
\end{equation}
in the derivative-free case, we recover the Gibbs measure introduced in~\eqref{EQ:Gibbs}. This observation suggests that one could potentially sample the Gibbs measure in a derivative-free manner using the adaptive variance scheme with $\sigma$ depending on $F$ in the form of~\eqref{EQ:Sigma Gibbs}.  This approach is particularly advantageous for large-scale applications where gradient evaluation is either infeasible or extremely costly. Moreover, the resulting sampling algorithm does not require density estimation, and it is entirely particle-based. 

The first main contribution of this work is to establish exponential convergence to the target Gibbs distribution for a broad class of dynamics in the form of~\eqref{EQ:AdVar FPE} with adaptive diffusion; see Theorem~\ref{thm:general_rate}. We show that these dynamics can be regarded as weighted Wasserstein gradient flows of the same energy functional under different weighted Wasserstein metrics. The convergence of the class of dynamics to the same target Gibbs distribution~\eqref{EQ:Gibbs} is quantified in terms of both the Kullback--Leibler (KL) divergence and the $\chi^2$ divergence. An interesting consequence of our main result is a new convergence rate for the overdamped Langevin dynamics (with a constant diffusion coefficient). While the classical results depend on the curvature of the Gibbs potential (see Theorem~\ref{thm:langevin}), our theorem instead yields a rate that depends on the variation of the Gibbs potential.

The second key contribution of this work is to demonstrate that not only can Gibbs distribution sampling be achieved in a derivative-free manner, but it also leads to a significantly faster rate of convergence compared to overdamped Langevin dynamics when sampling non-log-concave distributions. We demonstrate this advantage in two ways. First, based on Theorem~\ref{thm:general_rate}, the upper bound for the convergence rate to the target Gibbs distribution reveals that the derivative-free dynamics are more effective when the Gibbs potential is non-convex (see Section~\ref{subsec:rates_compare}). Second, in Section~\ref{sec:time}, we use a one-dimensional case study to compute the mean exit time for these two dynamics when transitioning from a local basin of attraction to another one. The derivative-free dynamics require $\mathcal{O}\left(\frac{1}{\epsilon}\right)$ time, whereas the overdamped Langevin dynamics take $\mathcal{O}\left(\exp\left(\frac{1}{\epsilon}\right)\right)$ time, which is significantly longer for very small $\eps$ (that is, when we sample distributions that are highly concentrated).

The paper is organized as follows: In Section~\ref{sec:WGF}, we introduce a new perspective on sampling algorithms in the class~\eqref{EQ:AdVar Cont} by treating them as different types of \textit{weighted} Wasserstein gradient flows, where the energy functional of $\rho$ is the KL divergence between $\rho$ and the target Gibbs distribution. This perspective allows us to extend the classic overdamped Langevin dynamics to various linear gradient flows with state-dependent diffusion coefficients. Section~\ref{sec:conv} characterizes the convergence behavior of these dynamics in terms of $\chi^2$ divergence and KL divergence, establishing non-asymptotic exponential rates and comparing their performance. In Section~\ref{sec:time}, we analyze two specific examples---the overdamped Langevin dynamics and the derivative-free dynamics---from a different perspective, comparing their (hill-climbing) mean exit times from an interval starting from the same point in a local valley. For the overdamped Langevin dynamics, the mean exit time depends on the energy gap between the local minimum and maximum, whereas interestingly for the derivative-free dynamics, it does not. Section~\ref{sec:numerics} provides examples demonstrating the sampling convergence rates and mean exit times, which align with the theory presented in the earlier sections. Finally, conclusions are presented in Section~\ref{sec:conclusion}.

\section{Adaptive Diffusion as Weighted Wasserstein Gradient Flows}\label{sec:WGF}

An important observation is that the Fokker--Planck equation~\eqref{EQ:AdVar FPE} can be rewritten in terms of the divergence form
\begin{equation*}
     \partial_t \rho =  \nabla \cdot \left(\rho 
 \left( \nabla H+ D \nabla \log \left( D \,\rho \right) \right) \right) .
\end{equation*}
When the diffusion coefficient $D(\bx) \equiv \eps$, the drift term $\nabla H(\bx)$ has to be $\nabla  F(\bx)$ to keep the Gibbs measure as a steady state. The corresponding dynamics becomes the classic overdamped Langevin diffusion. It is well-known that, in the case of $(H(\bx),D(\bx))=(F(\bx),\eps)$, \eqref{EQ:AdVar FPE} can be understood as a Wasserstein gradient flow equation thanks to the seminal ``JKO'' paper~\cite{jordan1998variational}:
\begin{equation}\label{eq:overdamped Langevin}
    \partial_t \rho=  \nabla (\rho \nabla F ) + \eps \Delta \rho = -\nabla_{W_2}\cE(\rho) =  \nabla \cdot \left( \rho \nabla \frac{\delta \cE}{\delta \rho}\right)\,,
\end{equation}
where the energy functional $\cE(\rho)$ is proportional to the Kullback--Leibler (KL) divergence between $\rho$ and $\pi_G$, and $\pi_G$ is the Gibbs distribution~\eqref{EQ:Gibbs}. That is,
\begin{equation}\label{eq:energy}
    \cE(\rho)= \eps \int \rho (\bx) \log\rho(\bx)  d\bx  +\int F(\bx)\rho(\bx) d\bx  = \eps\,  \text{KL}\left( \rho || \pi_G \right)\,.
\end{equation}

If $F(\bx)$ is a strongly convex function with respect to $\bx$, $\cE(\rho)$ is also strictly  convex along the Wasserstein geodesics, also known as displacement  or geodesically convex~\cite{ambrosio2005gradient}.  Consequently, one can establish various exponential convergence results in the form of~\eqref{eq: expo converge} with $d$ being the $\chi^2$ divergence, the  KL divergence, and the quadratic Wasserstein metric ($W_2$). However, it is worth addressing that many of the results may fail to hold if $F(\bx)$ is not strongly convex, which is equivalent to the steady-state distribution $\pi_G$ being non-log-concave.

Here, we generalize the JKO formulation and recast an entire class of dynamics in the form of~\eqref{EQ:AdVar FPE} as a weighted Wasserstein gradient flow of the energy functional $\cE(\rho)$ given in~\eqref{eq:energy}. To begin with, we define a general type of weighted quadratic Wasserstein metric $\mathcal{W}_w$  with a weight function $w(x,\rho)$:
\begin{equation}\label{eq:modified_W2_2}
    \mathcal{W}_w (\mu_0, \mu_1) = \sqrt{\inf_{(\rho_t, {\bf m}_t)} \left\{  \int_0^1 \int  \frac{| {\bf{m}}_t|^2}{\rho }\frac{\rho}{w(x,\rho)} d x d t  \right\}}\,,
\end{equation}
where $ {\bf m}_t$ represents the momentum and the pair $(\rho_t,   {\bf m}_t)$ is subject to the classic continuity equation
\begin{equation}\label{eq:continuity}
    \partial_t \rho_t + \nabla \cdot  {\bf{m}}_t  = 0,\quad d\mu_0 = \rho(x,t=0) dx ,\quad d\mu_1 = \rho(x,t=1) dx. 
\end{equation}
Note that if $w(x,\rho) = \rho =: w_1$, i.e., $\int \frac{\rho}{w(x,\rho)} d x = |\bbT^d|$, the distance given in~\eqref{eq:modified_W2_2} reduces to the conventional quadratic Wasserstein metric~\cite{Villani-Book03}.

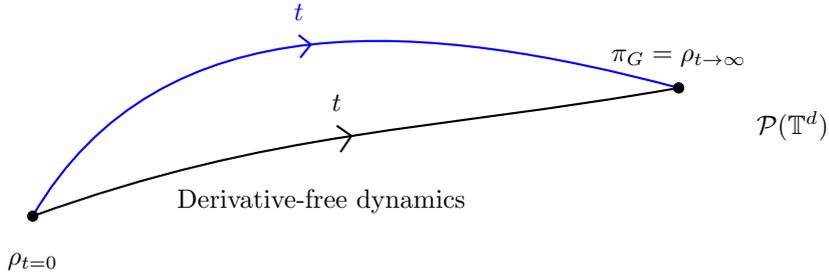
\begin{figure}
    \centering
\begin{tikzpicture}

\draw[thick,blue] (1.5, 1.3) to [out=60, in=165] coordinate[pos=0.5](B) (10, 3);
\draw[thick,blue,-] (B) -- (5.02,3.4) ($(10, 3)-(1.5, 1.3)$);
\draw[thick,blue,-] (B) -- (5,3.7) ($(10, 3)-(1.5, 1.3)$);

\draw[thick,black] (1.5, 1.3) to [out=20, in=190] coordinate[pos=0.5](C) (10, 3);
\draw[thick,black,-] (C) -- (5.55,2.15) ($(10, 3)-(1.5, 1.3)$);
\draw[thick,black,-] (C) -- (5.5,2.5) ($(10, 3)-(1.5, 1.3)$);

\node [circle,fill,inner sep=1.5pt]  at (1.5, 1.3) {};
\node [circle,fill,inner sep=1.5pt]  at (10, 3) {}; 

\node[blue] at (5, 4) {$t$};
\node[] at (5.5, 2.8) {$t$};

\node[] at (1.5, 0.7) {$\rho_{t=0}$};

\node[] at (10, 3.4) {$\pi_G = \rho_{t\rightarrow\infty}$};

\node[] at (11.5, 2.5) {$\mathcal{P}(\bbT^d)$};

\node[text=blue] at (5, 4.7) {Overdamped Langevin dynamics};

\node[text=black] at (5.3, 1.5) {Derivative-free dynamics};
\end{tikzpicture}
\caption{Overdamped Langevin~\eqref{eq:overdamped Langevin} and the derivative-free~\eqref{eq:diffuse_div_free} dynamics are two weighted Wasserstein gradient flows of the energy $\cE(\rho) = \text{KL}(\rho||\pi_G)$, yielding two curves in the space of probability distributions $\mathcal{P}(\bbT^d)$ starting from the same initial distribution $\rho_0$. Their convergence properties depend on different features of the Gibbs distribution $\pi_G$~\eqref{EQ:Gibbs}.}
    \label{fig:diagram}
\end{figure}

This type of weighted Wasserstein metric has been studied in the literature, in particular, to analyze nonlinear evolution equations~\cite{dolbeault2009new,carrillo2010nonlinear,lisini2009nonlinear,lisini2012cahn}. For any energy functional $U(\rho)$, the gradient flow of $U$ with respect to the geometry induced by $\mathcal{W}_w$ is 
\begin{equation}\label{eq:weighted W2 GF}
\partial_t \rho = -\nabla_{\mathcal{W}_{w}} U(\rho) =  \nabla \cdot \left( w(x,\rho) \nabla\frac{\delta U}{\delta \rho}   \right)\,.
\end{equation}
Many weighed/modified Wasserstein gradient flow have been studied, and we mention a few as examples. If $w(x,\rho) = \rho^\alpha$ and $U(\rho) = \int V \rho$ for some smooth potential $V$, we can view the scalar conservation law $\partial_t\rho = \nabla \left(\rho^\alpha \nabla V\right)$ as a weighted Wasserstein flow~\cite{dolbeault2009new}. In~\cite{lisini2012cahn}, the Cahn--Hilliard and certain thin-film equations are studied as gradient flow  equations for a perturbed Dirichlet energy with respect to certain weighted Wasserstein transport metrics. In particular, nonlinear diffusion equations with variable coefficients are studied in~\cite{lisini2009nonlinear} as gradient flows induced by the $2$-Wasserstein metric endowed with
the Riemannian distance. This perspective is very relevant to our work.

In this paper, we are interested in sampling the Gibbs distribution~\eqref{EQ:Gibbs}, which is the global minimizer of the energy functional $\cE(\rho)$ defined in~\eqref{eq:energy}. Using the first-order variation, we can compute 
\begin{equation*}
   \nabla  \frac{\delta \cE}{\delta \rho} = \nabla F(x) + \eps\, \nabla  \log \rho  \,,
\end{equation*}
and the general weighted Wasserstein gradient flow equation~\eqref{eq:weighted W2 GF} takes the form
\begin{equation}~\label{eq:weighted W2 GF KL}
    \partial_t \rho = - \eps \nabla \cdot \left( \rho \nabla   \left( \frac{w}{\rho} \right) \right) + \eps \Delta w + \nabla \cdot (w \nabla F)\,.
\end{equation}
In particular, if we assume that $w(\bx,\rho)$ is $1$-homogeneous with respect to $\rho$, i.e., 
\begin{equation}\label{eq:one-homo}
w(\bx,\rho) = f(\bx) \rho(\bx)
\end{equation}
for some positive function $f(\bx)$, the gradient flow~\eqref{eq:weighted W2 GF KL} is further reduced to
\begin{equation}~\label{eq:weighted W2 GF KL_1}
    \partial_t \rho = - \nabla \cdot\left( \rho \left( \eps \nabla f - f \nabla F \right) \right) + \eps \Delta \left(f \rho \right)\,.
\end{equation}
Additionally, if for some function $g: [F_{\min}, F_{\max}] \to \bbR_+$ with $F_{\min}$, $F_{\max}$ being the minimum and maximum value of $F(\bx)$ on the domain $\bbT^d$,  we have that 
\begin{equation}\label{eq:important_relation}
f = g\circ F\,,\quad D= \eps f \,,\quad  \text{and}\quad H(\bx) =  -D(\bx) + \int_{F_{\min}}^{F(\bx)} g(y)dy \,,
\end{equation}
then the weighted Wasserstein gradient flow equation~\eqref{eq:weighted W2 GF KL_1} becomes
\begin{equation*}
     \partial_t \rho =  \nabla \cdot \left( \rho \nabla H \right)   +   \Delta \left( D \rho \right)  \,,
\end{equation*}
which is the Fokker--Plank equation given in~\eqref{EQ:AdVar FPE}.

So far,  we have reinterpreted this class of adaptive diffusion dynamics as weighted Wasserstein gradient flows. Here,  we mention a few examples:
\begin{itemize}
    \item If $g\equiv 1$, then $D = \eps$ and $\nabla H =  \nabla F$. We recover the overdamped Langevin dynamics $\partial_t\rho =  \nabla \cdot \left(\rho \nabla F \right) + \eps \Delta \rho$.
    \item If $g = |\bbT^d|^{-1} Z_G\exp\left(\frac{x}{\eps} \right)$, then $D(\bx) = \eps |\bbT^d|^{-1} Z_Ge^{\frac{F(\bx)}{\eps}} $ and $\nabla H \equiv 0$. We recover the derivative-free dynamics  $\partial_t\rho =  \eps |\bbT^d|^{-1} Z_G \Delta \,  \left( e^{\frac{F(\bx)}{\eps}}   \rho\right) $.
    \item If $g(x) = x$, we have $D(\bx) = \eps F(\bx)$ and $\nabla H(\bx) = \left(   F(\bx) - \eps \right) \nabla  F(\bx) $, we obtain a PDE with a nonzero drift and a state-dependent diffusion coefficient:
    \[
    \partial_t\rho = \nabla \cdot \left( \left( F(\bx)-\eps \right) \nabla F(\bx) \rho\right)  + \eps \Delta (F(\bx) \rho )\,.
    \]
\end{itemize}

We want to point out that, for any proper choice of the weight function $w$, the Gibbs distribution $\pi_G \varpropto \exp\left(-\frac{F(\bx)}{\eps}\right)$ is always one steady state  of the weighted Wasserstein gradient flow~\eqref{eq:weighted W2 GF KL}.
This is due to the fact that $\frac{\delta \cE}{\delta \rho}  = 0$ when $\rho = \pi_G$. If we want to have a \textbf{linear diffusion term}, we can choose weights $w(\bx,\rho)$ that are $1$-homogeneous in $\rho$ in the form of~\eqref{eq:one-homo}. Then for any proper choice of variable-dependent function $f(\bx)$, $\pi_G$ will always be one steady state solution to~\eqref{eq:weighted W2 GF KL_1}. 
To avoid the artificial re-scaling of time, we further restrict that
\[
\int_{\bbT^d} f(\bx)^{-1} d\bx =  \int_{\bbT^d} 1 d\bx = |\bbT^d|\,. 
\]

A more intuitive interpretation of the weighted $W_2$ metric under the $1$-homogeneous weight~\eqref{eq:one-homo} is to view it as the Wasserstein metric induced by the Riemannian distance on $\bbT^d$ defined by
\[
d(x,y) = \left\{ \int_0^1\sqrt{\langle  \, f(\gamma(t))  \,\dot{\gamma}(t), \dot{\gamma}(t) } dt:\,\, \gamma \in \text{AC}([0,1];\bbT^d),\,\, \gamma(0) = x\,, \gamma(1) = y\right\}\,, 
\]
where $\text{AC}([0,1];\bbT^d)$ denotes the set of absolutely continuous curves in $\bbT^d$ parameterized in the time interval $[0,1]$. Then we can equivalently write down
\[
\mathcal{W}_{w}  (\mu_0,\mu_1) = \sqrt{\inf_{\pi \in \Pi(\mu_0,\mu_1)}\int d^2(x,y) d\pi(x,y)}\,,
\]
where $\Pi(\mu_0,\mu_1)$ is the set of all the coupling between measures $\mu_0$ and $\mu_1$. When $f(\bx) \equiv 1$, the Riemannian distance $d(x,y)$ reduces the standard Euclidean distance.

In the derivative-free case, \eqref{eq:weighted W2 GF KL_1} takes the following form:
\begin{align}
     \partial_t \rho&  = C  \nabla \cdot \left(\rho D \nabla \left( \log \left(  D \,\rho \right)\right) \right) \nonumber \\
     &= C \Delta (D \rho )\,,\quad C = |\bbT^d|^{-1} Z_G \,\eps\,,\label{eq:diffuse_div_free}
\end{align}
 where $Z_G$ is the normalizing constant 
 of the Gibbs measure given in~\eqref{EQ:Gibbs}. We remark that the constant scaling $|\bbT^d|$ in $C$ is necessary to ensure consistency of the time scale.

\section{Exponential Convergence Comparison}\label{sec:conv}

Despite that the entire class of dynamics~\eqref{EQ:AdVar FPE} all has the potential to converge to the target distribution~\eqref{EQ:Gibbs} since the Gibbs distribution is one steady state, we are interested in those that may exhibit exponential convergence along the trajectory.  We will provide the convergence rate for the whole class of dynamics in this section,  but  also have extra discussions on two special evolution PDEs:
\begin{eqnarray*}
    \text{Overdamped Langevin dynamics~\eqref{eq:overdamped Langevin}:}\qquad \partial_t \rho &=& -\nabla_{\mathcal{W}_{w_1}} \cE(\rho), \quad w_1(x,\rho) = \rho \,. \\ %
    \text{Derivative-free dynamics~\eqref{eq:diffuse_div_free}:}\qquad \partial_t \rho &=&   -\nabla_{\mathcal{W}_{w_2}} \cE(\rho),\quad w_2(x,\rho) = |\bbT^d|^{-1}\rho/\pi_G \,. %
\end{eqnarray*}
See Figure~\ref{fig:diagram} for an illustration.

\subsection{Convergence of adaptive diffusion}

In this subsection, we examine the convergence behavior of the class of weighted Wasserstein gradient flow equation~\eqref{eq:weighted W2 GF KL_1} in terms of the KL divergence and the $\chi^2$ divergence between the current time solution to the steady-state solution $\pi_G$ given in~\eqref{EQ:Gibbs}. The results are the counterparts of~\Cref{thm:langevin}. This analysis covers the derivative-free dynamics~\eqref{eq:diffuse_div_free}.

\begin{theorem}\label{thm:general_rate}
    Consider Equation~\eqref{eq:weighted W2 GF KL_1}, the weighted Wasserstein gradient flow of energy $\cE(\rho) = \text{KL}(\rho|\pi_G)$, i.e.,
    \begin{equation*}
        \partial_t \rho  = \nabla \cdot \left(f\,\rho\, \left(\nabla F + \eps \nabla \log \rho\right)\right)\,.
    \end{equation*}
    Assume that the initial distribution has enough regularity such that the strong solution to the equation exists for $t\in [0, \infty)$. Then the dynamics converges exponentially fast to the unique steady-state distribution $\pi_G$ given in~\eqref{EQ:Gibbs}:
    \begin{eqnarray}
        \chi^2(\rho(\bx,t)||  \pi_G) &\leq & \exp\left( -2 \lambda_1   t\right) \, \chi^2(\rho(\bx,0)||  \pi_G) \,,\label{eq:free PI}\\
        \text{KL}(\rho(\bx,t)||  \pi_G) &\leq &  \exp\left(- 
 2 \lambda_2 t \right) \Big(  2 \lambda_2 t  \log \left( D_{\max}Z_G\right)  + \text{KL}(\rho(\bx, 0) || \pi_G) \Big) \label{eq:free LSI}
        \end{eqnarray}
where 
\begin{equation}\label{eqn:lambdas}
\lambda_1 =   \frac{\lambda_\eps}{C_{\text{PI}}}  \ , 
\quad \lambda_2 =\frac{\lambda_\eps}{C_{\text{LSI}}}   \quad \text{with}\quad \lambda_\eps= \frac{\eps  D_{\min}^2}{D_{\max}}\min_{\bx\in\bbT^d} f(\bx)e^{-\frac{F(\bx)}{\eps}}\,.
\end{equation}
Here, $C_{\text{PI}}$ and $C_{\text{LSI}}$  are the Poincar\'e inequality and the Log-Sobolev inequality constants for the Lebesgue measure over $\bbT^d$, respectively, and $D_{\min} = \min_{\bx\in\bbT^d} e^{\frac{F(\bx)}{\eps}}$, $D_{\max} = \max_{\bx\in\bbT^d} e^{\frac{F(\bx)}{\eps}}$. 
\end{theorem}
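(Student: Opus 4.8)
The plan is to prove both inequalities via differential inequalities for the two divergences, using the weighted-Wasserstein gradient-flow structure to produce the dissipation and then comparing the weighted dissipation against a standard (unweighted) functional inequality on the torus. The skeleton is the classical one: compute $\frac{d}{dt}\,\text{KL}(\rho(t)\|\pi_G)$ (resp. $\frac{d}{dt}\,\chi^2$), recognize it as the negative of a weighted Fisher information, bound the weight from below, and close the loop with the Log-Sobolev (resp. Poincar\'e) inequality for the Lebesgue measure on $\bbT^d$. The novelty is only in how the $F$-dependent weight $f$ and the unusual reference measure interact.

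First I would set up the entropy dissipation. Writing the flow as $\partial_t\rho = \nabla\cdot\big(f\rho\,\nabla\log(\rho/\pi_G)\big)$ (up to the $\eps$ normalization absorbed into $D=\eps f$, and using $\nabla\log\pi_G = -\nabla F/\eps$), one gets
\begin{equation*}
\frac{d}{dt}\,\text{KL}(\rho\|\pi_G) = -\int f\,\rho\,\bigl|\nabla\log(\rho/\pi_G)\bigr|^2\,d\bx = -\int \frac{D}{\eps}\,\rho\,\bigl|\nabla\log(\rho/\pi_G)\bigr|^2\,d\bx\,.
\end{equation*}
Now the idea is to reroute this weighted Fisher information so that a \emph{Lebesgue} functional inequality applies. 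Since $\pi_G = Z_G^{-1}e^{-F/\eps}$, we have $D\rho = \eps f Z_G e^{F/\eps}\rho = $ (a multiple of) $f\rho/\pi_G$; more usefully, rewrite $\rho\,|\nabla\log(\rho/\pi_G)|^2$ in terms of $u := \rho/\pi_G$ or, better, in terms of the density $\rho$ against Lebesgue measure directly. The cleanest route: bound $f(\bx)\rho(\bx) \ge \big(\min_{\bx} f e^{-F/\eps}\big)\cdot Z_G\, e^{F(\bx)/\eps}\rho(\bx)$? — not quite; rather, factor out $\min_{\bx} f(\bx)e^{-F(\bx)/\eps}$ after writing the integrand against the target. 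I expect the precise bookkeeping here — tracking where each power of $D_{\min}, D_{\max}, Z_G$ and $\eps$ enters to reproduce $\lambda_\eps = \eps D_{\min}^2 D_{\max}^{-1}\min_{\bx} f(\bx)e^{-F(\bx)/\eps}$ — to be the main obstacle: it is the one genuinely delicate step, and getting the two separate $D_{\min}^2/D_{\max}$ factors (rather than, say, $D_{\min}/D_{\max}$) requires a careful two-sided comparison between $\rho\,d\bx$, $\pi_G\,d\bx$, and Lebesgue measure.

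Once the dissipation is bounded below by $\lambda_\eps$ times an unweighted Fisher information $\int \rho\,|\nabla\log(\rho\cdot\text{const})|^2\,d\bx$ referenced to Lebesgue measure, I invoke the Log-Sobolev inequality for the uniform measure on $\bbT^d$ (constant $C_{\text{LSI}}$) to get $\frac{d}{dt}\,\text{KL}(\rho\|\pi_G) \le -2\lambda_2\,\text{KL}(\rho\|\text{Leb})$ plus correction terms relating $\text{KL}(\rho\|\text{Leb})$ to $\text{KL}(\rho\|\pi_G)$ — the gap between these two is exactly $\int\rho\log(\pi_G^{-1}/|\bbT^d|)$, bounded by $\log(D_{\max}Z_G)$, which is the source of the additive $2\lambda_2 t\log(D_{\max}Z_G)$ term in~\eqref{eq:free LSI}. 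Solving the resulting linear differential inequality $\frac{d}{dt}\mathrm{KL} \le -2\lambda_2(\mathrm{KL} - \log(D_{\max}Z_G))$ by Gr\"onwall and integrating yields~\eqref{eq:free LSI}. For~\eqref{eq:free PI} the argument is parallel but cleaner: differentiate $\chi^2(\rho\|\pi_G) = \int (\rho/\pi_G - 1)^2\pi_G\,d\bx$, obtain $-2\int f\rho\,|\nabla(\rho/\pi_G)|^2/(\rho/\pi_G)$-type dissipation, bound the weight below by $\lambda_\eps$, apply the Poincar\'e inequality for Lebesgue measure, and conclude $\frac{d}{dt}\chi^2 \le -2\lambda_1\chi^2$, hence the pure exponential decay with no additive term. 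I would also note, for completeness, that existence of the strong solution on $[0,\infty)$ is assumed in the hypothesis, so all the integrations by parts above are justified, and uniqueness of the steady state follows since $\pi_G$ is the unique zero of $\cE$.
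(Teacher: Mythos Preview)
Your overall strategy---compute the dissipation of each divergence, bound out the $x$-dependent weight, close with a functional inequality for the \emph{Lebesgue} measure on $\bbT^d$, then Gr\"onwall---is exactly the paper's, and for the KL half your sketch lines up with what they do: set $v=c^{-1}e^{F/\eps}\rho$ (a probability density), bound $f\rho\ge D_{\min}\min_\bx\!\big(fe^{-F/\eps}\big)\,v$ via $c\ge D_{\min}$, apply the Lebesgue log-Sobolev inequality to $v$, then compare $\int v\log v$ back to $\text{KL}(\rho\|\pi_G)$, which produces the second $D_{\min}/D_{\max}$ factor and the additive $\log(cZ_G)\le\log(D_{\max}Z_G)$ term. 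That is where both pieces of $\lambda_\eps=\eps\,D_{\min}^2D_{\max}^{-1}\min_\bx(fe^{-F/\eps})$ arise.

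The real gap is in your $\chi^2$ sketch. After the dissipation step you have, with $u=\rho/\pi_G-1$,
\[
\frac{d}{dt}\chi^2=-2\eps\int f\,\pi_G\,|\nabla u|^2\,d\bx,\qquad \chi^2=\int u^2\,\pi_G\,d\bx,\qquad \int u\,\pi_G\,d\bx=0.
\]
You propose to ``apply the Poincar\'e inequality for Lebesgue measure,'' but the mean-zero condition you possess is against $\pi_G$, not Lebesgue, so $\int u\,d\bx\neq 0$ in general and the unweighted Poincar\'e does not apply to $u$ directly. The paper resolves this by working with $e^{F/\eps}v$ (where $v=\rho-\pi_G$), which has mean zero against the weight $e^{-F/\eps}$, and invoking a \emph{weighted} Poincar\'e inequality for Muckenhoupt $A_2$ weights (their Appendix result): for $w=e^{-F/\eps}$ the $A_2$ constant obeys $[w]_2\le D_{\max}/D_{\min}$, and this ratio is precisely the second factor that combines with the first bound $D_{\min}\min_\bx(fe^{-F/\eps})$ to yield $\lambda_\eps$. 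This weighted Poincar\'e step is the one genuinely new technical ingredient in the proof, and your proposal does not name it. (If you insist on the unweighted Poincar\'e, you would first need the observation $\chi^2=\int(u-\bar u)^2\pi_G-\bar u^2\le(\max\pi_G)\int(u-\bar u)^2$, with $\bar u$ the Lebesgue mean of $u$; this also closes the loop, but it is an extra step your sketch omits.)
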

\begin{proof}
We first prove the case of $\chi^2$ divergence given in~\eqref{eq:free PI}. Let $v=\rho-\pi_G$. Then $v$ solves
\begin{equation*}
    \partial_t v= \nabla \cdot \left(f\,v\, \left(\nabla F + \eps \nabla \log v\right)\right)\,,
\end{equation*}
since~\eqref{eq:weighted W2 GF KL_1} is linear with respect to $\rho$,  and $\pi_G$ is a steady-state solution. It is worth noting that $\int v(\bx,t) d\bx \equiv 0 $ for all $t$. We comment that $v(\bx,t)$, $\rho(\bx,t)$ for any fixed $t$, together with $\pi_G(\bx)$ and $F(\bx)$, are  functions over the torus $\bbT^d$. We further have 
\begin{eqnarray*}
    \frac{d}{dt}\left(\int_{\bbT^d} e^{\frac{F(\bx)}{\eps}} v^2(\bx, t)d\bx \right)&=&-2 \eps \int_{\bbT^d} fv \, \nabla \left( e^{\frac{F}{\eps}} v \right)\cdot \nabla \log \left(e^{\frac{F}{\eps}}v\right) d\bx\\
    &=&-2 \eps \int_{\bbT^d} \Big|\nabla \left( e^{\frac{F}{\eps}} v\right) \Big|^2 f e^{-\frac{F}{\eps}} d\bx\\
    & \leq & -2 \eps D_{\min} \min_{\bx\in \bbT^d}\left(f e^{-\frac{F}{\eps}}\right) \, \int_{\bbT^d} \Big|\nabla \left( e^{\frac{F}{\eps}} v\right) \Big|^2 e^{-\frac{F}{\eps}} d\bx  \\
    &\leq & - \frac{2\eps}{C_{\text{PI}}} \frac{ 
D_{\min}^2 }{D_{\max}  }\min_{\bx\in \bbT^d}\left(f e^{-\frac{F}{\eps}} \right)  \int_{\bbT^d} e^{\frac{F(\bx)}{\eps}} v^2(\bx, t)d\bx\,,
\end{eqnarray*}
where $D_{\min}$ and $D_{\max}$ are the minimum and maximum values of $e^{\frac{F}{\eps}}$ on the domain $\bbT^d$. In the last step above, we have applied the weighted Poincar\'e inequality in Theorem~\ref{thm:wpi}. In our case, we consider the weight function
$$
    w(\bx) = f(\bx)e^{-\frac{F(\bx)}{\eps}}
$$ 
in the Muckenhoupt $A_2$ class (see~\Cref{DEF:Ap}). We can find an upper bound of its $A_2$ constant $[w]_2$ as
\begin{equation}
\label{eq:constant_our_setting}
    [w]_2 = \sup_{Q\subset \bbT^d} \left({\frac {1}{V_Q }}\int _{Q}w (\bx)\,d\bx\right)\left({\frac {1}{V_Q}}\int _{Q}w (\bx)^{-1}\,d\bx\right) \leq \frac{\max_{\bx\in \bbT^d} w(\bx) }{\min_{\bx\in \bbT^d} w(\bx)  }  
    = \frac{D_{\max}}{D_{\min}}\,.
\end{equation}
Based on~\eqref{eq:wpi} in the Appendix,  for the hypercube $\bbT^d$ and a Lipschitz function $e^{\frac{F}{\eps}}v$ satisfying $\int_{\bbT^d}(e^{\frac{F}{\eps}}v) \, e^{-\frac{F}{\eps}}d\bx = \int_{\bbT^d}v(\bx,t)  d\bx = 0$, we have
\[
\int_{\bbT^d} |e^{\frac{F}{\eps}}v|^2 e^{-\frac{F}{\eps}}  d\bx  \leq  C_{\text{PI}} \frac{D_{\max}}{D_{\min}}  \int_{\bbT^d} \Big|\nabla \left(e^{\frac{F}{\eps}}v \right)\Big|^2 e^{-\frac{F}{\eps}} d\bx\,,
\]
where $C_{\text{PI}}$ is the Poincar\'e inequality constant with respect to the Lebesgue measure, only depending on the domain $\bbT^d$:
\[
\int_{\bbT^d} |v|^2  d\bx  \leq  C_{\text{PI}} \int_{\bbT^d} |\nabla v|^2 d\bx\,,\quad \forall v \,\,\,\text{s.t.~} \int v d\bx  = 0\,.
\]
Note that 
\[
\int_{\bbT^d} e^{\frac{F(\bx)}{\eps}} v^2(\bx, t)d\bx = \|\rho - \pi_G\|_{L^2(e^{F/\eps})}^2 \varpropto \chi^2(\rho \| \pi_G ) := \left\|\frac{\rho}{\pi_G} - 1\right\|_{L^2(\pi_G)}^2\,,
\]
where $\chi^2(\rho \| \pi_G )$ is the chi-squared divergence between $\rho$ and $\pi_G$.
By Gr\"onwall's inequality, we have~\eqref{eq:free PI}.

To get the convergence result~\eqref{eq:free LSI} regarding the KL-divergence, we differentiate the energy $\cE(\rho)$ given in~\eqref{eq:energy} with respect to $t$. This leads to
\begin{eqnarray*}
\frac{d}{dt} \text{KL}(\rho || \pi_G)   &=&  \int   \left( 1 + \log \rho - \log \pi_G \right)  \, \partial_t \rho \, d\bx\\
&=& \eps \int  \left( 1 + \log \rho - \log \pi_G \right) \nabla \cdot \left( f \rho \nabla \left( \log \left( e^{\frac{F}{\eps}}\,\rho \right)\right) \right)\, d\bx\\
&=&  -   \eps \int f\,\rho  |\nabla \left( \log \left( e^{\frac{F}{\eps}}\,\rho \right)\right) |^2 d\bx \\
&\leq &  -  \eps D_{\min} \min_{\bx\in\bbT^d}\left(f e^{-\frac{F}{\eps}} \right) \int v  |\nabla  \log  v  |^2 d\bx\,,\quad v = c^{-1 } e^{\frac{F}{\eps}}\,\rho ,\quad c= \int_{\bbT^d} e^{\frac{F}{\eps}}\,\rho \,  d\bx \\
&\leq &  - \frac{2 \eps D_{\min} \min_{\bx\in\bbT^d}\left(f e^{-\frac{F}{\eps}}\right) }{C_{\text{LSI}}}  \int v \log v \,d\bx \,.
\end{eqnarray*}
In the last step above, we used the log-Sobolev inequality for the Lebesgue measure over $\bbT^d$: 
\[
\int_{\bbT^d} v \log v \, d\bx \leq  \frac{C_{\text{LSI}}}{2} \int_{\bbT^d} v  |\nabla  \log  v  |^2 d\bx\,,\quad  \text{for\,}\int_{\bbT^d} v\, d\bx = 1\,.
\]
The key constant $C_{\text{LSI}}$ only depends on the diameter of $\bbT^d$~\cite{mikulincer2024brownian}. We address that we are not using the log-Sobolev inequality with respect to the Gibbs measure $\pi_G$, which could potentially yield a much worse constant.

Note that 
\begin{eqnarray*}
    \int v\log v d\bx & = & \int c^{-1} e^{\frac{F}{\eps}}\rho \left(\log (e^{\frac{F}{\eps}}\rho) - \log c\right)  d\bx \\
    &\geq & \frac{D_{\min}}{D_{\max}}\left(\int  \rho \log (e^{\frac{F}{\eps}} \rho)  d\bx  - \log  c \right) \\
    &= & \frac{D_{\min}}{D_{\max}}\left(\int  \rho \log (e^{\frac{F}{\eps}} \rho)  d\bx  - \log  c \right) + \frac{D_{\min}}{D_{\max}} \int \rho \log Z_G d\bx  - \frac{D_{\min}}{D_{\max}} \log Z_G\\
    &=& \frac{D_{\min}}{D_{\max}} \text{KL}(\rho || \pi_G)  - \frac{D_{\min}}{D_{\max}}  \log \left( c\, Z_G \right)\,.
\end{eqnarray*}
Combining the above inequalities together, we  have
\[
\frac{d}{dt} \text{KL}(\rho || \pi_G)  \leq -\ 2\lambda_2 \text{KL}(\rho || \pi_G)  +   2\lambda_2  \log \left( c Z_G \right)\,,
\]
\[
\text{KL}(\rho(\cdot,t)) || \pi_G)  \leq  \text{KL}(\rho(\cdot,0)) || \pi_G) + 2\lambda_2 \int_0^t   \log \left( c(s) Z_G \right) ds + \int_0^t -  2\lambda_2  \text{KL}(\rho(\cdot, s) || \pi_G)  ds \,,
\]
where $\lambda_2$ is given in~\eqref{eqn:lambdas}.
We remark that $ c Z_G = \left(\int e^{\frac{F}{\eps}} \rho d\bx \right) \left( \int e^{-\frac{F}{\eps}}d\bx\right)  \geq 1 $ by H\"older Inequality. Thus,  $\log \left( c Z_G \right) \geq 0$. By Gr\"onwall's inequality, we have that 
\begin{eqnarray*}
     \text{KL}(\rho(\bx, t) || \pi_G) &\leq& \left(  2\lambda_2  \int_0^t \log \left( c(s) Z_G \right)  ds + \text{KL}(\rho(\bx, 0) || \pi_G) \right)  \exp\left(- 2\lambda_2 t \right)\, \nonumber \\
     &\leq & \Big(    2\lambda_2 t \log \left( D_{\max}Z_G\right) + \text{KL}(\rho(\bx, 0) || \pi_G) \Big) \exp\left(- 2\lambda_2 t \right) \, ,
\end{eqnarray*}
showing exponential convergence in the KL-divergence~\eqref{eq:free LSI}.
\end{proof}

\subsection{Convergence of the overdamped Langevin dynamics}

The convergence of the overdamped Langevin dynamics~\eqref{eq:overdamped Langevin} can be established in terms of both the KL divergence and the $\chi^2$ divergence. The former requires the Gibbs distribution $\pi_G$ to satisfy the Poincar\'e inequality, while the latter requires that the log-Sobolev inequality to hold. 

Consider $\pi_G$ with a compact support $\Omega :=\text{supp}(\pi_G) $ on which
$\pi_G \varpropto \exp\left(-\frac{F(\bx)}{\eps}\right)$. If on the interior of $\Omega$, the Hessian of $F(\bx)$ satisfies
\begin{equation}\label{eq:kappa-log-concave}
D^2 F(\bx) \succeq \kappa \,\text{Id},\quad \kappa \in \bbR,
\end{equation}
then we say $\pi_G$ is $\frac{\kappa}{\eps}$-log-concave. Under this condition, both the Poincar\'e inequality and the log-Sobolev inequality hold for $\pi_G$~\cite{mikulincer2024brownian}.

The following theorem regarding the overdampled Langevin dynamics is a simple exercise of variational analysis. We provide the proof sketch here for completeness.

\begin{theorem}\label{thm:langevin}
    Assume that  the potential function $F(\bx)$ of the overdamped Langevin diffusion~\eqref{eq:overdamped Langevin} satisfies~\eqref{eq:kappa-log-concave} with the constant $\kappa\in \mathbb{R}$ on the domain $\bbT^d$. Assume that the initial distribution has enough regularity such that the strong solution to the equation exists for $t\in [0, \infty)$.
    Then the time-dependent PDE solution  converges to the Gibbs distribution $\pi_G$ given in~\eqref{EQ:Gibbs}  exponentially fast:
    \begin{eqnarray}
        \chi^2(\rho(\bx,t)||\pi_G) &\leq & \exp\left( -2\lambda  t\right) \, \chi^2(\rho(\bx,0)||\pi_G) \,,\label{eq:langevin PI}\\
        \text{KL}(\rho(\bx,t)||\pi_G) &\leq & \exp\left( - 2\lambda  t\right) \, \text{KL}(\rho(\bx,0)||\pi_G)\,,\label{eq:langevin LSI}
        \end{eqnarray}
where the rate
\begin{equation}\label{eq:langevin_constants}
    \lambda= \begin{cases}
        \kappa  & \text{if}\,\,\, \kappa \, L^2 \geq \eps \,,\\
       \dfrac{2\eps}{L^2 + L^2e^{1 - \tfrac{\kappa L^2}{\eps}} } & \text{if}\,\,\, \kappa \, L^2 < \eps \,,
    \end{cases}
\end{equation}
and $L = \text{diam}(\bbT^d)$.
\end{theorem}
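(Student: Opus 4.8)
The plan is to mirror the structure of the proof of Theorem~\ref{thm:general_rate}, but now working directly with the Gibbs-weighted functional inequalities rather than with the Lebesgue measure. For the $\chi^2$ estimate~\eqref{eq:langevin PI}, I would set $v = \rho/\pi_G - 1$ and compute
\[
\frac{d}{dt}\,\chi^2(\rho\|\pi_G) = \frac{d}{dt}\int v^2 \,\pi_G\, d\bx = -2\eps \int |\nabla v|^2 \,\pi_G\, d\bx,
\]
using the divergence form $\partial_t\rho = \nabla\cdot(\rho\nabla F + \eps\nabla\rho) = \eps\,\nabla\cdot(\pi_G\nabla v)$ and integration by parts on the torus. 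The Poincaré inequality for $\pi_G$ with constant $C_{\mathrm{PI}}(\pi_G) = 1/(2\lambda)$ then gives $\frac{d}{dt}\chi^2 \le -2\eps\cdot\frac{1}{C_{\mathrm{PI}}(\pi_G)}\chi^2$, and Grönwall closes the argument. The KL estimate~\eqref{eq:langevin LSI} is the classical de Bruijn / dissipation-of-entropy computation: differentiating $\cE(\rho) = \eps\,\mathrm{KL}(\rho\|\pi_G)$ gives $\frac{d}{dt}\mathrm{KL}(\rho\|\pi_G) = -\int \rho\,|\nabla\log(\rho/\pi_G)|^2\,d\bx$ (the relative Fisher information), and the log-Sobolev inequality for $\pi_G$ with constant $C_{\mathrm{LSI}}(\pi_G) = 1/\lambda$ yields $\frac{d}{dt}\mathrm{KL} \le -2\lambda\,\mathrm{KL}$, again closed by Grönwall.

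The only genuine content beyond these standard computations is identifying the rate $\lambda$ in~\eqref{eq:langevin_constants}, i.e.\ a lower bound on the log-Sobolev / Poincaré constant of $\pi_G \propto e^{-F/\eps}$ on $\bbT^d$ under the single assumption $D^2 F \succeq \kappa\,\mathrm{Id}$ with $\kappa$ possibly negative. Here I would invoke the Bakry--Émery criterion: when $\kappa/\eps > 0$ (the case $\kappa L^2 \ge \eps$ is a convenient sufficient form of $\kappa > 0$ once one has absorbed the diameter), the measure is $(\kappa/\eps)$-log-concave and $C_{\mathrm{LSI}}(\pi_G) \le \eps/\kappa$, giving $\lambda = \kappa$. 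When $\kappa$ is small or negative, one uses a Holley--Stroock-type perturbation argument: write $\pi_G$ as a bounded perturbation of the uniform measure on $\bbT^d$ (which, having diameter $L$, satisfies LSI with an explicit constant of order $L^2$), and pick up a multiplicative factor $e^{\mathrm{osc}(F)/\eps}$ from the perturbation. Combined with the oscillation bound $\mathrm{osc}(F) \le \kappa_-L^2$ coming from $D^2F \succeq \kappa\,\mathrm{Id}$ (so that $F + \tfrac{|\kappa_-|}{2}|\bx|^2$ is convex, controlling the oscillation on a set of diameter $L$), this produces precisely the stated form $\lambda = 2\eps/(L^2 + L^2 e^{1-\kappa L^2/\eps})$ after optimizing the constants.

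The step I expect to be the main obstacle is the second case: getting the \emph{explicit} constant in~\eqref{eq:langevin_constants} rather than merely a qualitative bound. The Bakry--Émery half is essentially automatic, but the nonconvex regime requires (i) a sharp enough LSI constant for the uniform measure on the torus of diameter $L$, (ii) a clean oscillation bound for $F$ in terms of $\kappa$ and $L$ under only a Hessian lower bound, and (iii) tracking the constants through the Holley--Stroock perturbation so the two formulas agree continuously at the threshold $\kappa L^2 = \eps$. Since the theorem statement itself advertises this as "a simple exercise of variational analysis" for which "we provide the proof sketch here for completeness," I would present it at the level of a sketch: state the two functional inequalities for $\pi_G$ with the claimed constants (citing~\cite{mikulincer2024brownian} for the torus constants and Bakry--Émery / Holley--Stroock for the dependence on $\kappa$), then carry out the two Grönwall arguments in full, which are short and self-contained.
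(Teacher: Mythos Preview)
Your proposal is correct and matches the paper's approach: the paper also reduces both estimates to the standard dissipation computations $\partial_t\chi^2 = -2\eps\|\nabla(\rho/\pi_G)\|_{L^2(\pi_G)}^2$ and $\partial_t\mathrm{KL} = -\eps\int\rho\,|\nabla\log(\rho/\pi_G)|^2$, applies the Poincar\'e and log-Sobolev inequalities for $\pi_G$, and closes with Gr\"onwall. The only difference is that the paper does not attempt the Bakry--\'Emery/Holley--Stroock derivation of the explicit constant in~\eqref{eq:langevin_constants} at all; it simply cites \cite[Table~1]{mikulincer2024brownian} for both $C_{\mathrm{PI}}(\pi_G)$ and $C_{\mathrm{LSI}}(\pi_G)$ and stops there, so your extended discussion of the nonconvex regime is more than what the paper supplies.
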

\begin{proof}
We sketch the proof below. Define $v = \rho - \pi_G$. Note that 
\[
\int_{\bbT^d} v^2(\bx, t)  \pi_G^{-1} d\bx = \|\rho - \pi_G\|_{L^2( \pi_G^{-1})}^2 \varpropto \chi^2(\rho \| \pi_G ) = \left\|\frac{\rho}{\pi_G} - 1\right\|_{L^2(\pi_G)}^2\,,
\]
where $\chi^2$ is the chi-squared divergence. Let $\wt v =v / \pi_G$.
\begin{eqnarray*}
 \partial_t  \|\wt v\|_{L^2(\pi_G)}^2=   \partial_t  \int_{\bbT^d} v^2(\bx, t)  \pi_G^{-1} d\bx =  -2\eps\|\nabla \wt v\|_{L^2(\pi_G)}^2 \leq  -\frac{2\eps}{ C_{PI}(\pi_G)} \|\wt v\|_{L^2(\pi_G)}^2\,,
\end{eqnarray*}
where $C_{PI}(\pi_G)$ is the Poincar\'e inequality constant of $\pi_G$. Similarly,
\begin{eqnarray*}
 \partial_t  \text{KL}(\rho||\pi_G)=     
 - \eps  \int \nabla \log \left(\frac{\rho}{\pi_G}\right)^2 \rho \, d\bx \leq  -\frac{2 \eps }{C_{LSI}(\pi_G) }\text{KL}(\rho||\pi_G)\,,
\end{eqnarray*}
where $ C_{LSI}(\pi_G)$ is the Log-Sobolev inequality constant of $\pi_G$.
Based on~\cite[Table 1]{mikulincer2024brownian} for $C_{PI}(\pi_G)$ and $C_{LSI}(\pi_G)$, and Gr\"onwall's inequality, we obtain~\eqref{eq:langevin PI} and ~\eqref{eq:langevin LSI}.
\end{proof}

In most cases, the overdamped Langevin dynamics on a compact domain has exponential convergence, including nonconvex potential functions, unless the eigenvalue of $D^2 F(\bx)$ cannot be lower bounded. The convergence rate is also directly determined by the curvature of the potential function $F(\bx)$. 

When $\kappa<0$, i.e., $\pi_G$ is not log-concave, the convergence rate decreases to zero exponentially fast with respect to the diffusion constant $\eps$. This highlights the importance of annealing, i.e., adapting the diffusion constant $\eps$, in sampling multi-model distributions.

\subsection{Convergence comparisons}\label{subsec:rates_compare}

We have the following remarks about~\Cref{thm:general_rate} and~\Cref{thm:langevin} regarding the overdamped Langevin dynamics and the derivative-free dynamics~\eqref{eq:diffuse_div_free}.
\begin{enumerate}
    \item All results in \Cref{thm:general_rate} and~\Cref{thm:langevin} are upper bounds for the convergence, which might not be tight. However, 
    when the potential function $F(\bx)$ is constant, both theorems returns the convergence rate for the standard heat equation with $\lambda_1 = \frac{\eps}{C_{\text{PI}}}$ and  $\lambda_2 =  \frac{\eps}{C_{\text{LSI}}}$. Furthermore, the numerical examples in Section~\ref{sec:numerics} suggest that these upper bounds are close to being tight, as the observed exponential convergence rates agree with our theoretical predictions.
    \item Note that $\frac{Z_G}{|\bbT|^d}   \geq \frac{1}{D_{\max}}$, and thus 
$\lambda_\eps \geq \eps \exp\left(\frac{2}{\eps}\left(F_{\min} - F_{\max} \right)\right) $, 
where $F_{\min}$ and $F_{\max}$ are the minimum and maximum values of $F(\bx)$ over $\bbT^d$.   This reduces the convergence rate to a similar form to the one for the overdamped Langevin dynamics~\eqref{eq:langevin_constants} in the case of $\kappa < 0$: 
    \begin{eqnarray*}
    \text{Overdamped Langevin dynamics~\eqref{eq:overdamped Langevin}:}& \quad  &\mathcal{O}\left(\eps \exp\left(\frac{\kappa L^2}{\eps}\right)\right)\,, \label{eq:over-rmk}\\
    \text{Derivative-free diffusion~\eqref{eq:diffuse_div_free}:} &\quad&\mathcal{O}\left(\eps \exp\left(\frac{2(F_{\min}-F_{\max})}{\eps}\right)\right)\,.\label{eq:free-rmk}
    \end{eqnarray*}  
    If $F(\bx)$ is strictly concave, then we have $-\kappa L^2 \approx 2(F_{\min}-F_{\max})$. As a result, the two rates above are comparable. However, if $F(\bx)$ has many local minima and maxima, we could have 
$-\kappa L^2 \ll 2(F_{\min}-F_{\max})$.  Consequently, the derivative-free diffusion exhibits much better convergence performance with bigger rates, as seen in our numerical results (see Section~\ref{subset:conv}).
\item 
    One can view the overdamped Langevin dynamics discussed in Theorem~\ref{thm:langevin} as a special case of the general result in Theorem~\ref{thm:general_rate} with $f(\bx) \equiv 1$. The proof techniques are different in terms how the Log-Sobolev inequality and the Poincar\'e inequality are utilized. However, the result in Theorem~\ref{thm:general_rate} yields a rate of 
\begin{equation}\label{eq:Langevin=rate-general}
    \mathcal{O}\left(\eps \exp\left(\frac{2(F_{\min}-F_{\max})}{\eps}\right)\right)\,,
    \end{equation}
    independent of the curvature of $F(\bx)$.  Since both theorems provide upper bounds for the speed of $\rho(\bx, t)$ converging to the Gibbs measure $\pi_G$, they do not contradict each other.

\item In the limit of $\eps\rightarrow 0$, we can apply the Laplace's principle:
\begin{equation*}%
    Z_G = \int_{\bbT^d}\exp\left(- \frac{F(\bx)}{\eps}\right)d\bx  \varpropto \eps^{\frac{d}{2}}  \exp\left(- \frac{F_{\min}}{\eps}\right)  = \frac{\eps^{\frac{d}{2}}}{D_{\min}}\,.
    \end{equation*}
As a result, the rate for the derivative-free case is
\begin{equation}
    \mathcal{O}\left(\eps^{1+\frac{d}{2}} \exp\left(\frac{F_{\min}-F_{\max}}{\eps}\right)\right)\,.\label{eq:free-rmk-2}
\end{equation}
The exponential function's exponent in~\eqref{eq:free-rmk-2} is half that for~\eqref{eq:Langevin=rate-general}, implying a better rate of convergence for the derivative-free dynamics than the one for the overdamped Langevin dynamics.

Moreover, when $\eps\rightarrow 0$, the target Gibbs distribution converges to a Dirac delta centered at the global minimum of the Gibbs potential $F(\bx)$. The derivative-free sampling algorithm with fast convergence to the steady state can be utilized to perform global optimization. This builds the connection to our earlier work~\cite{engquist2022algebraically,engquist2024adaptive}.
\end{enumerate}

\section{Mean Exit Time }\label{sec:time}
In Section~\ref{sec:conv}, we compare the overdamped Langevin dynamics~\eqref{eq:overdamped Langevin} and the proposed derivative-free diffusion~\eqref{eq:diffuse_div_free} in terms of their convergence to the target Gibbs distribution $\pi_G$ given in~\eqref{EQ:Gibbs}. Although both dynamics exhibit exponential convergence in terms of both the KL divergence and the $\chi^2$ divergence, the interesting difference lies in the fact that the rate for the overdamped Langevin dynamics depends on the curvature of the potential $F(\bx)$ while the one for the derivative-free equation relies on energy variation of $F(\bx)$.

Results in Section~\ref{sec:conv} are in the form of upper bounds.  We would like to have a mean of comparison that is related to the ``lower bound'' for the amount of time it takes till the dynamics is close to the target measure.  Therefore,  in this section, we continue the comparison between these dynamics from a different angle: the mean exit time.  The mean exit time intuitively reflects how long it takes a particle to escape a local minimum of the nonconvex potential $F(\bx)$ so that the particle can visit the rest of the domain.

In particular,  we study the case of $\bx\in \bbR$ and the  domain of interest can be formed by the local minima of the potential function $F(\bx)$. This is often referred to as the hitting time, mean first-passage time or transition time in various setups of the same problem. Recall that the corresponding SDEs for the two dynamics~\eqref{eq:overdamped Langevin} and~\eqref{eq:diffuse_div_free} are
\begin{eqnarray*}
    \text{Overdamped Langevin dynamics~\eqref{eq:overdamped Langevin}:}\quad  
       dX_t &=& -\nabla F(X_t) dt + \sqrt{2\eps} \, dW_t\\
        \text{Derivative-free dynamics~\eqref{eq:diffuse_div_free}:}\quad 
    dX_t &=& \sqrt{2 \eps Z_G|\bbT^d|^{-1} \exp\left(F(X_t)/\eps\right)} \, dW_t
\end{eqnarray*}
It is often of interest to know how long a particle $X_t$ whose law follows the Fokker--Planck equation~\eqref{EQ:AdVar FPE} stays or exits a region of the domain. 

The following calculations are for the mean first-passage time problem in one dimension (1D) with the periodic boundary condition. Let $\mathcal T(x)$ be the time $X_t$ exits a domain $(a,b)$ with an initial position $X_0 = x \in (a,b)$. Define $G(x,t) $ as the probability of $\mathcal T(x) \geq t$. Then the mean first passage time is given by
\[
u(x) = \mathbb{E}[\mathcal{T}(x)] = \int_0^\infty G(x,t) dt\,.
\]
Moreover, $G(x,t)$ obeys the backward Kolmogorov equation, i.e., the adjoint of~\eqref{EQ:AdVar FPE}, with respect to the standard $L^2$ inner product~\cite[Sec.~5.2.7]{gardiner1985handbook}.

\subsection{Overdamped Langevin dynamics}
When $X_t$ follows the overdamped Langevin dynamics~\eqref{eq:overdamped Langevin}, the differential equation that $u(x)$ satisfies is:
\[ 
\eps u''(x) - F'(x) u'(x) + 1 = 0 \,.
\]
Here, we are interested in solving this with the following boundary condition 
\[
u(a) = u(b) = 0\,.
\]

Multiplying the equation for $u$ by the integrating factor \( e^{-F(x)/\eps} \), we have that
\[ 
e^{-F(x)/\eps} \left( \eps  u''(x) - F'(x) u'(x) + 1 \right) = 0\,.
\]
This leads to
\[ 
\frac{d}{dx} \left( \eps e^{-F(x)/\eps} u'(x) \right) + e^{-F(x)/\eps} = 0\,.
\]
Integrating both sides with respect to \( x \) gives us:
\[ 
    \eps e^{-F(x)/\eps} u'(x) + \int_a^x e^{-F(y)/\eps} dy = C_1\,,
\]
where \(C_1\) is an integration constant to be specified by the boundary condition.
Integrating again gives us
\[ 
u(x) = \frac{C_1}{\eps}\int_a^x \exp\left(\frac{F(y)}{\eps} \right) dy -\frac{1}{\eps}  \int_a^x \int_a^y \exp\left(\frac{F(y)-F(z)}{\eps} \right)dz dy \,.
\]
Since $u(b) = u(a) = 0$, we pin down the constant
\[ 
C_1=\Big(\int_a^b \exp\left(\frac{F(y)}{\eps} \right)dy\Big)^{-1}\int_a^b \int_a^y \exp\left(\frac{F(y)-F(z)}{\eps} \right) dz dy \,.
\]

\subsection{Derivative-free diffusion} 
For the derivative-free case, $u(x)$ satisfies
\[
\frac{\eps Z_G}{|\bbT^d|} \exp\left(\frac{F(x)}{\eps} \right)u''(x) = -1\,,\qquad u(a) = u(b) = 0\,.
\]
Simple calculation yields
\[
u(x) =  \frac{|\bbT^d|}{\eps Z_G}\frac{x- a}{b-a}  \int_a^b \int_a^y \exp\left(-\frac{F(z)}{\eps} \right)dz dy \, - \frac{|\bbT^d|}{\eps Z_G } \int_a^x \int_a^y \exp\left(-\frac{F(z)}{\eps} \right)dz dy \,.
\]

\subsection{Exit time comparison}

\begin{figure}
    \centering
    \subfloat[The potential $F(x)$]{\includegraphics[width=0.5\linewidth]{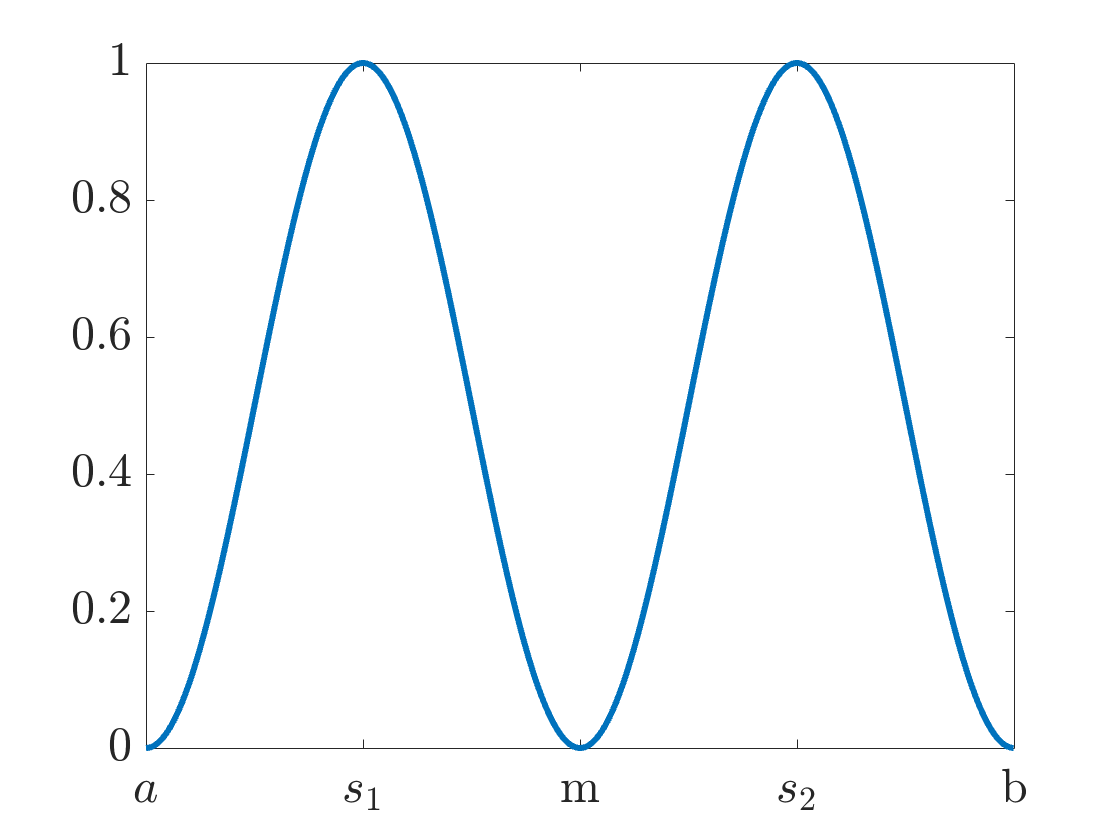}}
    \subfloat[The Gibbs distribution]{\includegraphics[width=0.5\linewidth]{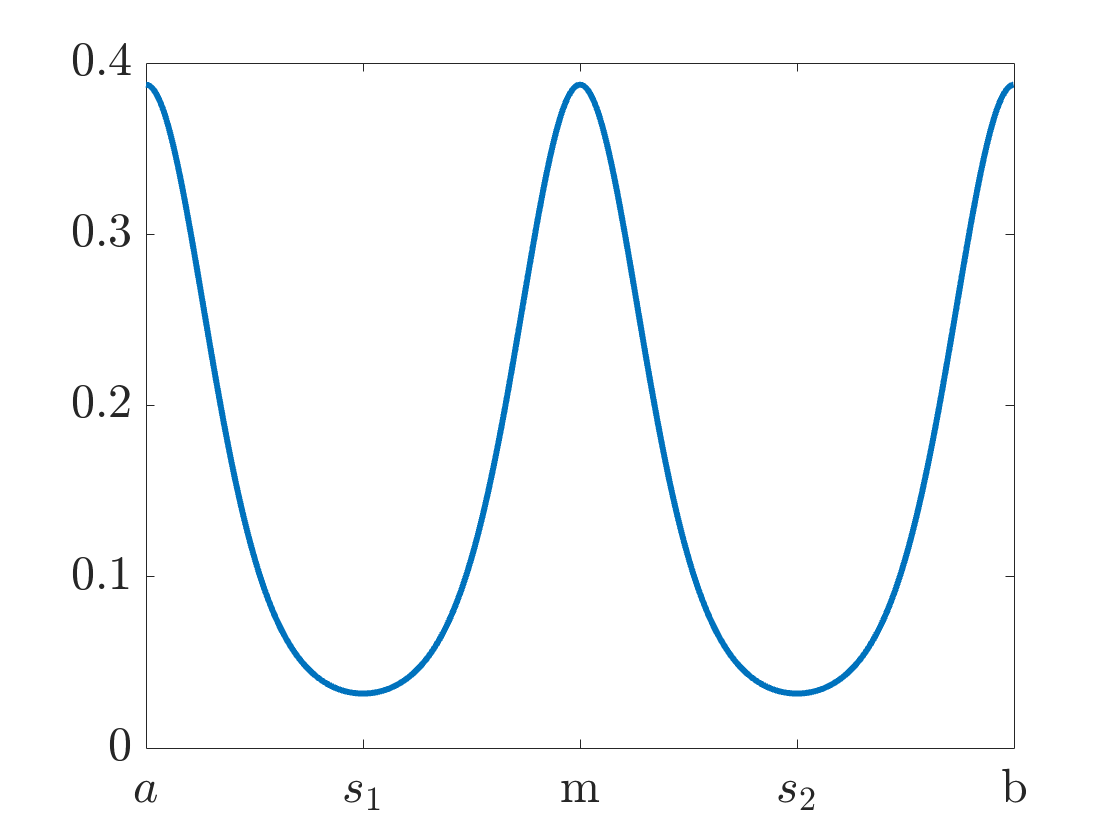}}
    \caption{A double-well potential (a) and its corresponding Gibbs distribution (b).}
    \label{fig:double-well}
\end{figure}

Let us consider the case where $F(x)$ is a double-well potential, with $m$ and $a$ (equivalently $b$) denoting the minima of the two wells, and $s_1$ and $s_2$ being the saddle points between the two wells. Without loss of generality, we assume that $F(s_1) = F(s_2)$, and $F''(s_1) = F''(s_2)$; see Figure~\ref{fig:double-well} for an illustration.  We are interested in how long it takes the dynamics to transit from $m$ to a neighborhood of $a$ or $b$. This correspond to setting $x = m$ in the mean-exit time function $u(x)$.

Assuming that near the saddling point $s_i$, $i = 1,2$, we have
\[
F(x) = F(s_i) - \frac{1}{2}\left(\frac{x-s_i}{\alpha_s}\right)^2\,,
\]
and near the interior local minimum  $m$, 
\[
F(x) = F(m) + \frac{1}{2}\left(\frac{x-m}{\alpha_m}\right)^2\,,
\]
then in the limit when $\eps \rightarrow 0$ we can approximate 
\begin{eqnarray*}
    \int_{s_1}^{s_2}  \exp \left(-\frac{F(z)}{\eps}\right)  dz &\approx & \alpha_m \sqrt{2\pi \eps}  \exp \left(-\frac{F(m)}{\eps}\right)\,, \\
     \int_{a}^{m}  \exp \left(\frac{F(z)}{\eps}\right)  dz &\approx& \alpha_s \sqrt{2\pi \eps}  \exp \left(\frac{F(s_1)}{\eps}\right) \approx \int_{m}^{b}  \exp \left(\frac{F(z)}{\eps}\right)  dz  \,.
\end{eqnarray*}
Under these assumptions, we can estimate on the mean exit time:
\begin{itemize}
    \item Langevin dynamics:
    \begin{eqnarray*}
        T_L(m \rightarrow a \text{ or } b) &\varpropto &  \alpha_s \alpha_m \exp \left(\frac{F(s_1) - F(m)}{\eps}\right) \,.
    \end{eqnarray*}
    \item Derivative-free dynamics:
    \begin{eqnarray}
        T_D(m \rightarrow a \text{ or } b) 
 &\varpropto&   \frac{ (b - m )(m - a) 
 \alpha_m }{\sqrt{\eps}Z_G} \exp \left(-\frac{F(m)}{\eps}\right) \,.   \nonumber
\end{eqnarray}
In the limit of $\eps \to 0$, 
$$
Z_G \approx 2 \alpha_a \sqrt{\pi \eps}\exp\left(-\frac{F(a)}{\eps}\right) + 2 \alpha_m \sqrt{\pi \eps}\exp\left(-\frac{F(m)}{\eps}\right)  ,
$$ implying
   \begin{equation*}
    T_D(m \rightarrow a \text{ or } b) \varpropto   \frac{ (b - m )(m - a)}{\eps} \exp \left(\frac{ 0\wedge   \left( F(a)- F(m)\right) }{\eps}\right)\,,
  \end{equation*}
if $\alpha_m \varpropto  \alpha_a$ where $1/\alpha_a$ is the second-order derivative of $F(x)$ at $a$.
\end{itemize}

Based on the calculations above,  if $F(a) = F(m)$, i.e., $x=a$ and $x=m$ are on the same level set of $F$, it takes $\mathcal{O}(\frac{1}{\epsilon})$ time for the derivative-free dynamics to escape one basin of attraction to another,  but it would take $\mathcal{O}(\exp\left(\frac{1}{\epsilon}\right))$ time for the overdamped Langevin dynamics!  Again,  from the mean exit time comparison, we see the significant advantage of the derivative-free dynamics in sampling non-log-concave distributions.

\section{Numerical Experiments}\label{sec:numerics}

In this section, we present several numerical experiments that illustrate the theoretical results established in Sections~\ref{sec:conv} and~\ref{sec:time}. Our focus is primarily on comparing the overdamped Langevin dynamics given by~\eqref{eq:overdamped Langevin} with the derivative-free dynamics described in~\eqref{eq:diffuse_div_free}. It is important to note that both of these dynamics are specific instances of the broader class of weighted Wasserstein gradient flows~\eqref{eq:weighted W2 GF KL_1}, where the weights are given by $f \equiv 1$ and $f \varpropto 1/\pi_G$, respectively.

\begin{figure} [!htbp]
    \centering
    \subfloat[The potential $F(x)$]{\includegraphics[width=0.5\linewidth]{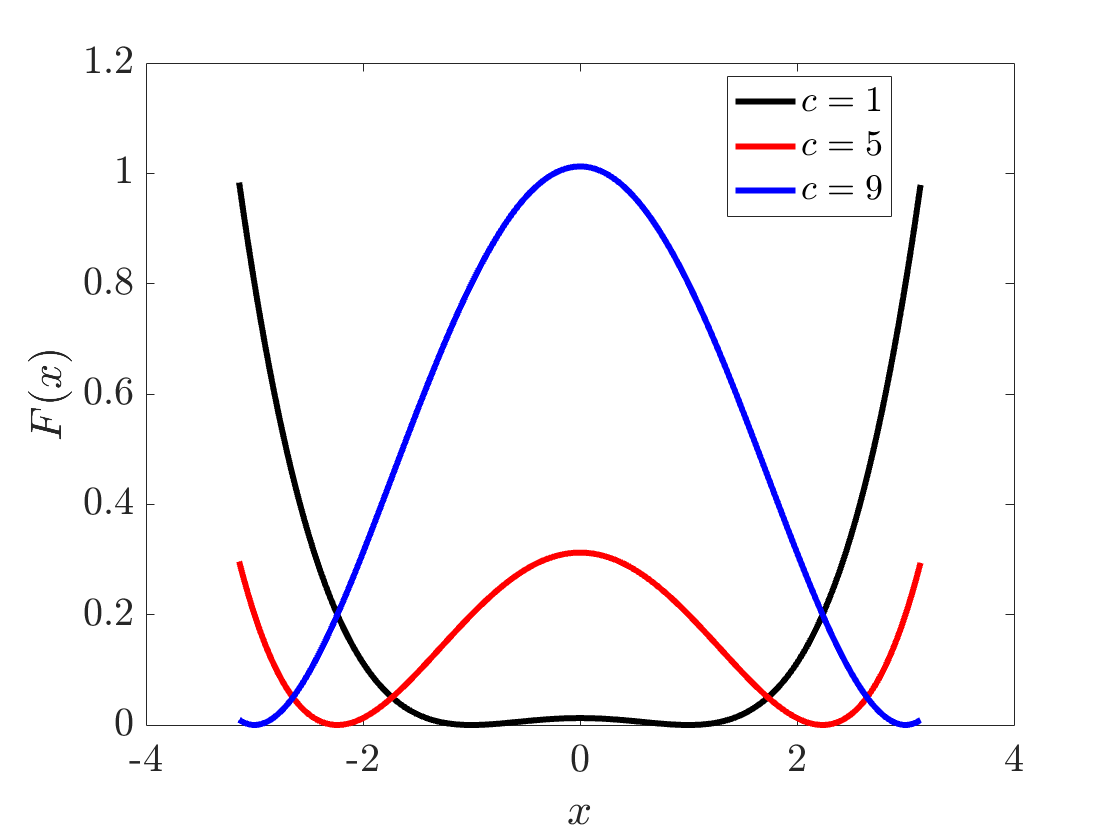}}
    \subfloat[The Gibbs distribution $\pi_G$]{\includegraphics[width=0.5\linewidth]{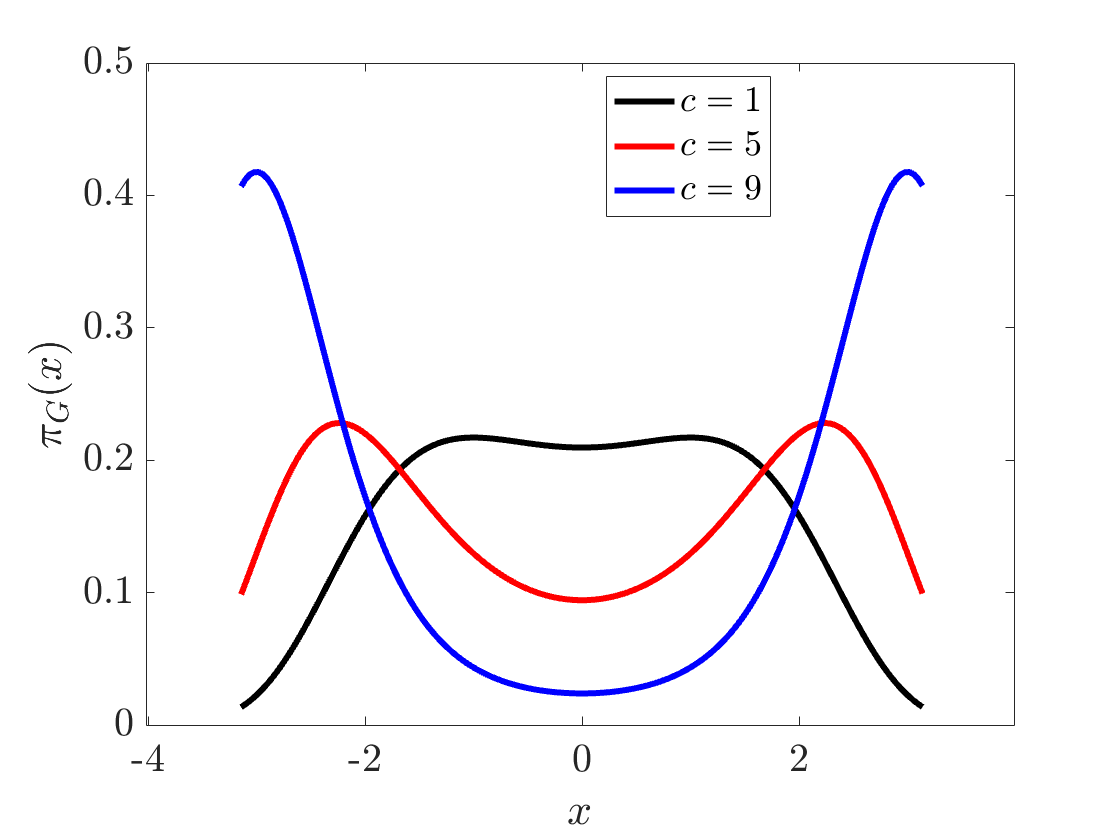}}
    \caption{(a) Double-well potentials considered in Section~\ref{subset:conv} and (b) their corresponding multimodal Gibbs distribution.}
    \label{fig:double-well-conv}
\end{figure}

\subsection{Double-well potential in one dimension}\label{subset:conv}
We consider a parameterized double-well potential 
\[
F(x) = \frac{1}{80}  \left( x^2 - c \right)^2 \,, \quad x\in [-\pi, \pi]\,,
\]
where $c$ is chosen from $\{1,5,9\}$. As shown in Figure~\ref{fig:double-well-conv}, $c$ has a modest effect on the energy gap $F_{\max} - F_{\min}$ but significantly influences the curvature of the potential.

We apply the Euler--Maruyama scheme and run $10{,}000$ i.i.d.~trajectories with the initial distribution $\mathcal{N}\left(-\frac{\pi}{2}, 0.01^2\right)$ over the time interval $[0, 20]$, using a time step of $\Delta t = 10^{-4}$. Figures~\ref{fig:double-well-conv-KL} and~\ref{fig:double-well-conv-chi2} display the $\text{KL}$ divergence and the $\chi^2$ divergence between $\rho_t$ and the target Gibbs distribution $\pi_G$ for each value of $c$. For the overdamped Langevin dynamics, the convergence rate decreases as $c$ increases, which results from a reduction in the minimum curvature of $F(x)$, i.e., $\min F''(x)$. According to Theorem~\ref{thm:langevin}, the case $c=9$ leads to a slower convergence rate compared to $c=1$. However, since $F_{\max} - F_{\min}$ is relatively insensitive to the value of $c$, the derivative-free dynamics exhibit a uniform convergence rate across all values of $c$. This observation aligns with our analysis in Theorem~\ref{thm:general_rate}, highlighting that the convergence of the derivative-free dynamics depends solely on the potential energy upper and lower bounds.

\begin{figure}
    \centering
\subfloat[KL divergence decay]{\includegraphics[width = 0.5\textwidth]{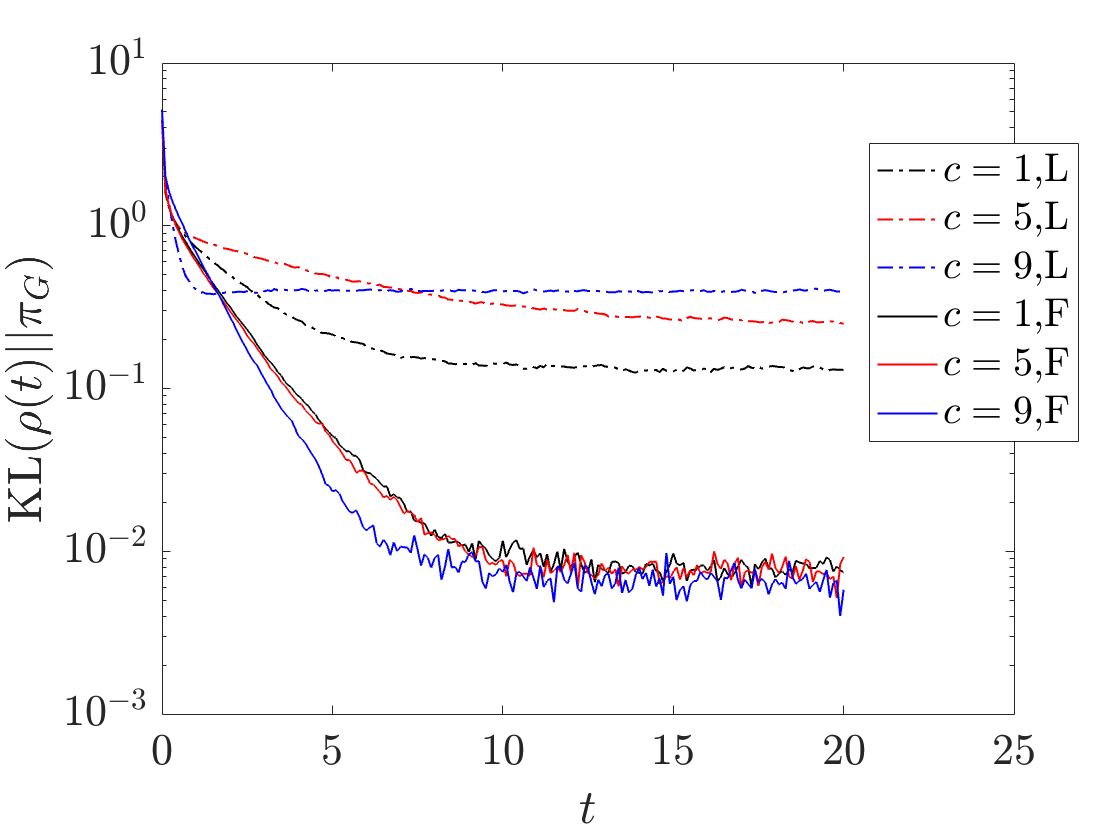}  \label{fig:double-well-conv-KL}}
\subfloat[$\chi^2$ divergence decay]{\includegraphics[width = 0.5\textwidth]{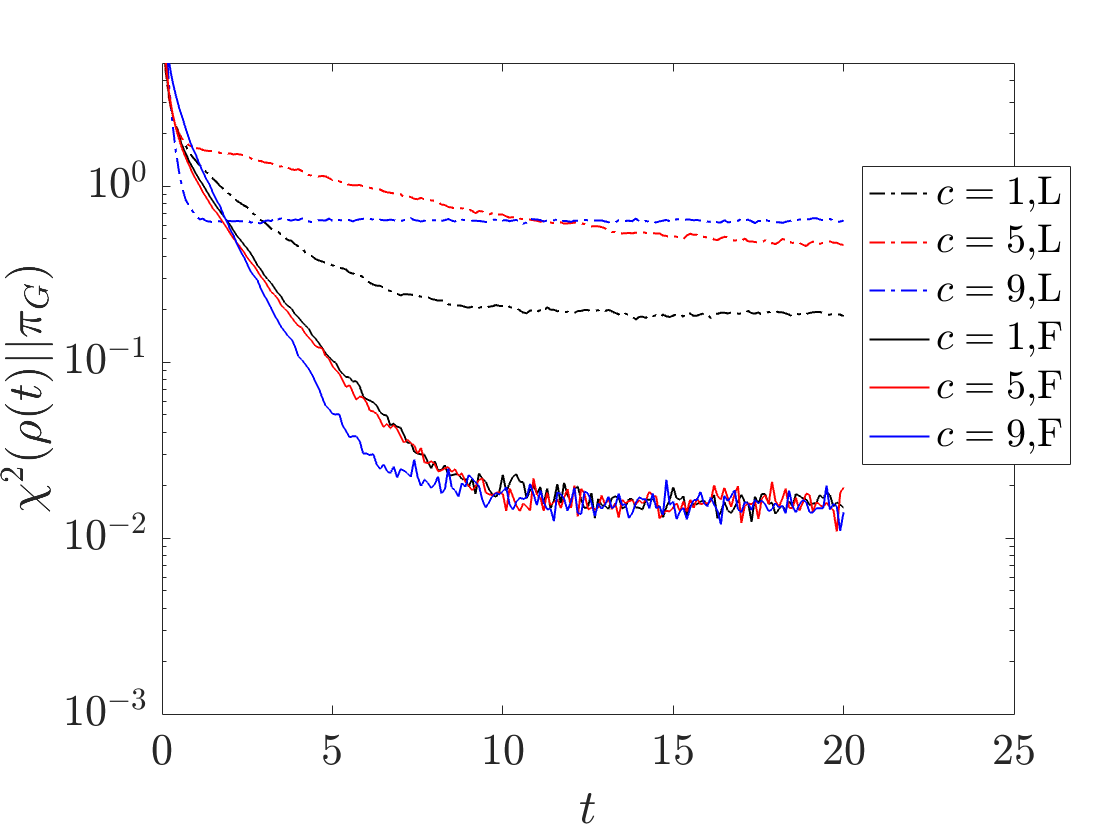} \label{fig:double-well-conv-chi2}}
    \caption{Convergence behavior of the KL divergence $\text{KL}(\rho_t||\pi_G)$ and the $\chi^2$ divergence $\chi^2(\rho_t||\pi_G)$  for the overdamped Langevin dynamics~\eqref{eq:overdamped Langevin} and  the derivative-free dynamics~\eqref{eq:diffuse_div_free}. As shown in Figure~\ref{fig:double-well-conv}, we consider a class of double-well potentials. The dashed lines are the results of the Langevin dynamics, and the solid lines are for the derivative-free dynamics. The cases of $c=1$, $c=5$, and $c=9$ are plotted in black, red, and blue, respectively.}
 
\end{figure}

\subsection{Multi-modal distribution in 2D}\label{subset:multimodal}

Next, we consider a multi-modal target distribution in 2D. Its Gibbs potential is
\[
F(\bx) = 2 |\sin(2\pi (\bx-0.1))|^2\,, 
\]
on the domain $[-1,1]^2 \subset \mathbb{R}^2$. The distribution has $16$ modes over the domain; see Figure~\ref{fig:2D potential} for an illustration. We are interested in sampling the Gibbs distribution $\pi_G(\bx) \varpropto \exp{(-20 F(\bx))}$ (see Figure~\ref{fig:2D true}). That is, $\eps = 0.05$.

We run both the derivative-free dynamics~\eqref{eq:diffuse_div_free} and the overdamped Langevin dynamics~\eqref{eq:overdamped Langevin} for the time interval $[0,T]$ with $T= 10$ and the time step size $\Delta t = 10^{-4}$. The initial distribution $\rho_0 = \mathcal{N}([-0.2,-0.2]^\top, 0.01^2 I)$ where $I$ is the identity matrix. Since the initial distribution is highly localized, the particles must climb all the ``hills'' in the Gibbs potential to sample the distribution supported over the entire domain. We ran the dynamics with $10^5$ i.i.d.~particles, whose trajectories are utilized to perform density estimation at time $T = 10$.

We illustrate the resulting distributions for the two dynamics in Figures~\ref{fig:2D free} and~\ref{fig:2D langevin}. We can see that the overdamped Langevin barely moved away from the initial distribution $\rho_0$ after time $T$, only exploring a few neighboring modes of the target distribution. In Section~\ref{sec:time},  we reviewed the classic theory for the mean exit time between local minima, which is exponential in $\frac{1}{\eps}$ and the ``height'' of the hill. 

In contrast, particle trajectories following the derivative-free dynamics rapidly explored all $16$ modes of the potential, converging to a distribution close to the true one, with only minor errors arising from the density estimation step. The key reason the derivative-free dynamics achieve fast convergence is that they are not constrained by the curvature of the potential, which significantly slows down the overdamped Langevin dynamics.

\begin{figure}
\centering
\subfloat[Gibbs potential]{\includegraphics[angle=0,width=0.49\textwidth]{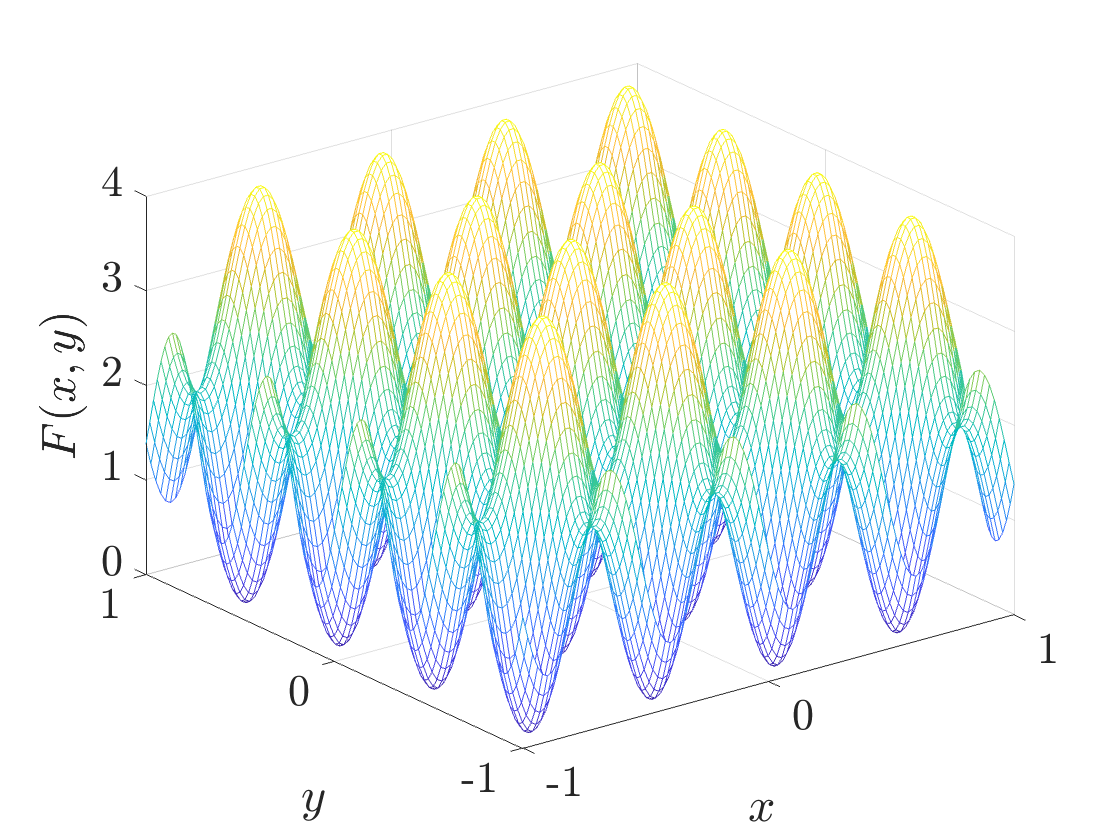}\label{fig:2D potential}}
\subfloat[True distribution]{\includegraphics[angle=0,width=0.49\textwidth]{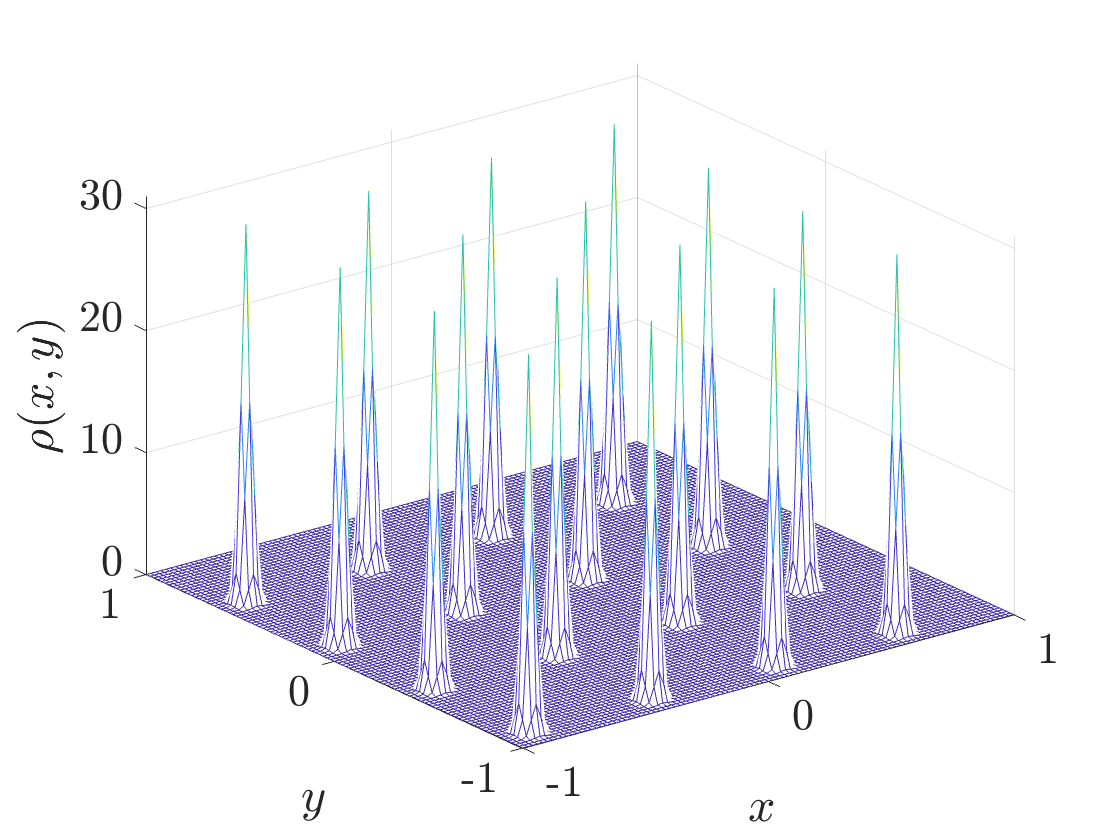}\label{fig:2D true}}\\
\subfloat[Derivative-free dynamics]{\includegraphics[angle=0,width=0.49\textwidth]{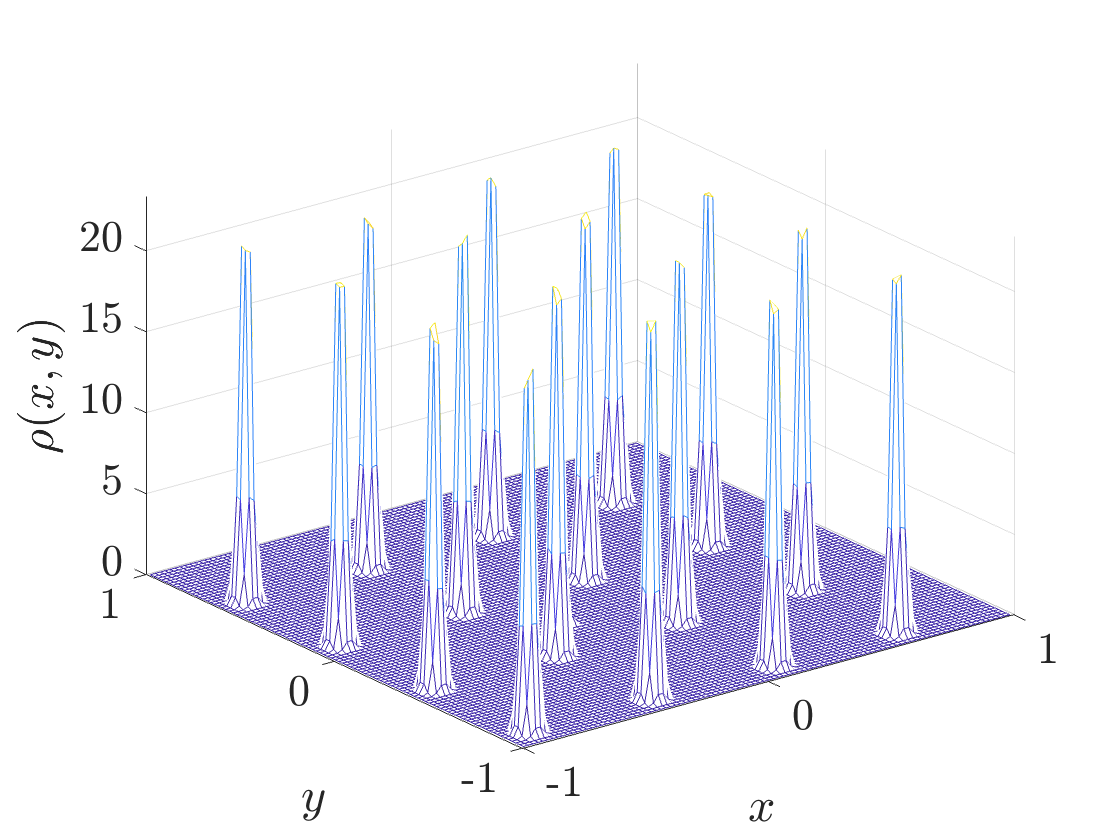}\label{fig:2D free}}
\subfloat[Langevin dynamics]{\includegraphics[angle=0,width=0.49\textwidth]{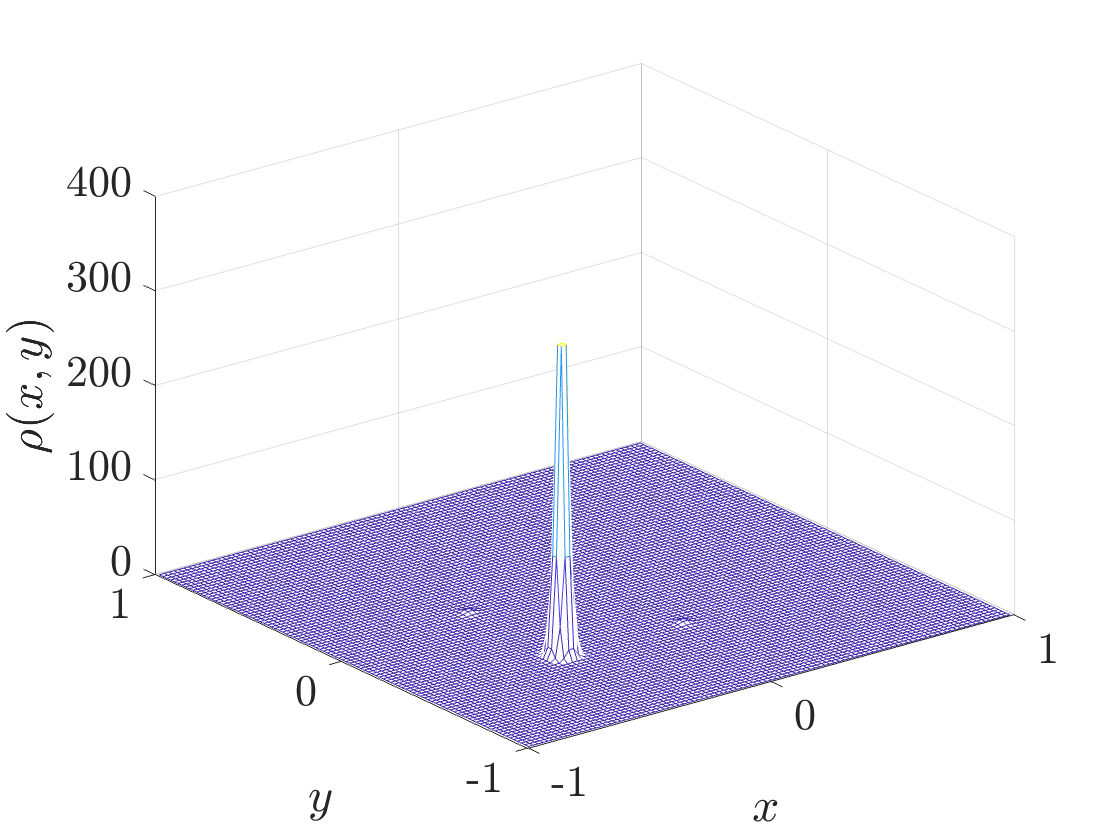}\label{fig:2D langevin}}
\caption{(a) The Gibbs potential $F$, (b) the target Gibbs distribution $\pi_G(\bx) \varpropto \exp{(-20 F(\bx))}$, (b) the probability distribution of the derivative-free dynamics at $T=10$, and (c) the probability distribution of the overdamped Langevin dynamics at $T=10$.\label{fig:2D conv}}
\end{figure}

\subsection{Mean exit  time}
In this subsection, we revisit the double-well example shown in Figure~\ref{fig:double-well} in Section~\ref{sec:time}. With the initial particle located at $m=0$, we are interested in the expected time it takes for the particle to reach either $a = -\pi$ or $b = \pi$. 

We test it on both the overdamped Langevin dynamics~\eqref{eq:overdamped Langevin} and the derivative-free dynamics~\eqref{eq:diffuse_div_free}. The mean exit times for both dynamics are shown in Figure~\ref{fig:time}. It is evident from the plots that $T = \mathcal{O}\left(\exp\left(\frac{1}{\eps}\right)\right)$ for the overdamped Langevin dynamics and is $\mathcal{O}\left(\frac{1}{\eps}\right)$ for the derivative-free dynamics. This matches our analysis in Section~\ref{sec:time}.

\begin{figure}[!htbp]
    \centering
    \subfloat[Overdamped Langevin dynamics~\eqref{eq:overdamped Langevin}]{\includegraphics[width=0.5\linewidth]{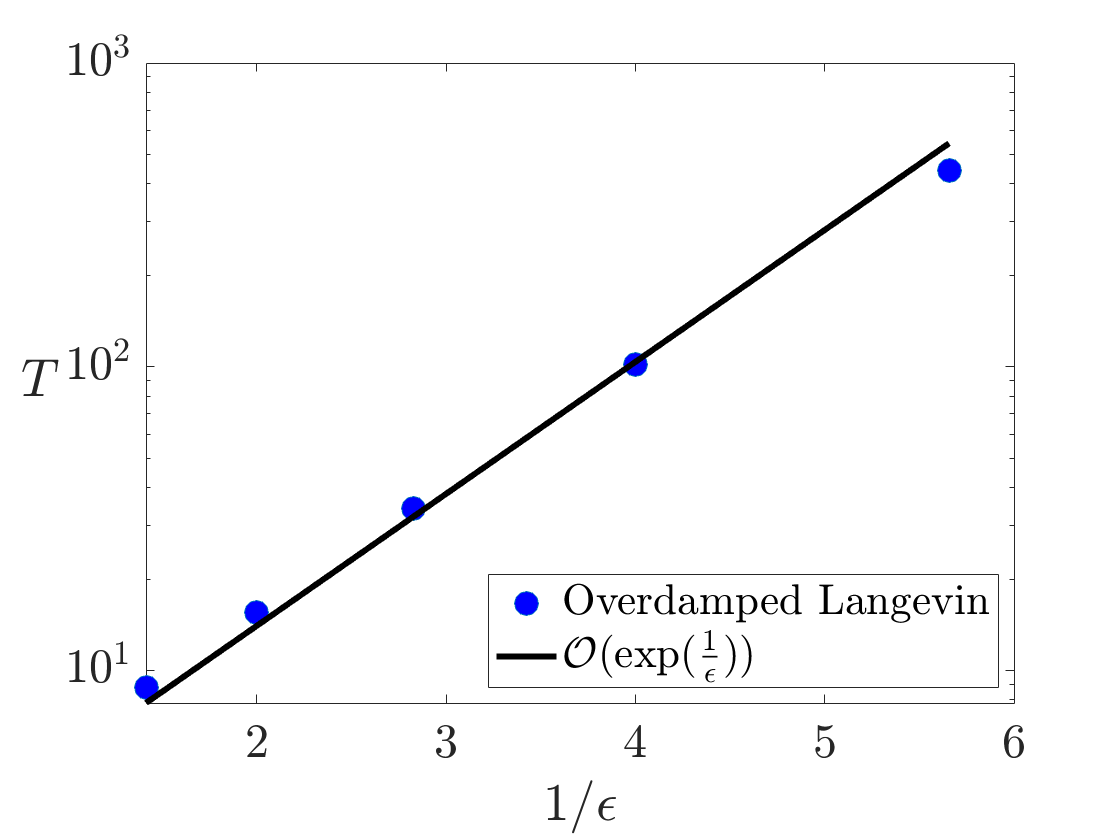}}
    \subfloat[Derivative-free dynamics~\eqref{eq:diffuse_div_free}]{\includegraphics[width=0.5\linewidth]{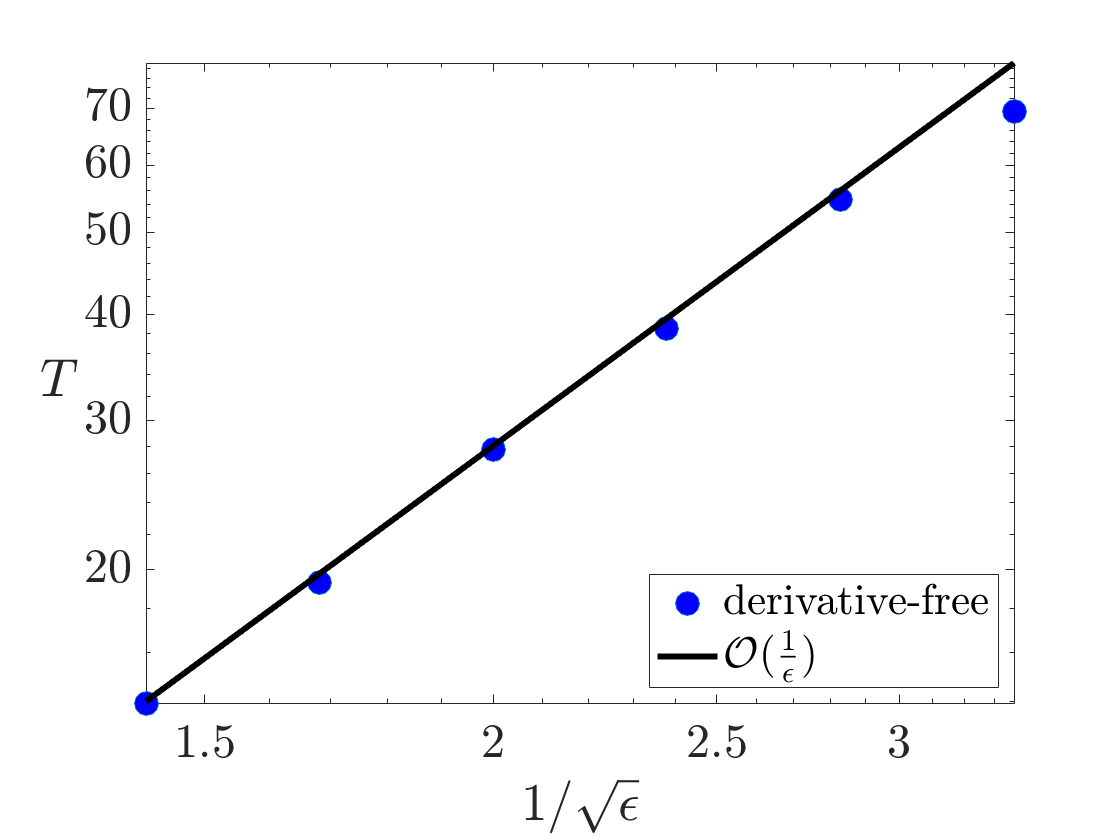}}
    \caption{Mean exit time comparison between the the overdamped Langevin dynamics~\eqref{eq:overdamped Langevin} and the derivative-free dynamics~\eqref{eq:diffuse_div_free} for the example shown in Figure~\ref{fig:double-well}. All dynamics start at $m=0$, and we estimate $T = \mathbb{E} \left[\inf_T \left(|X_T|-\pi|\right)^2 <10^{-4}\right]$. The expectations are estimated using $1000$ i.i.d.~runs. It is evident from the plots that $T = \mathcal{O}\left(\exp\left(\frac{1}{\eps}\right)\right)$ for the overdamped Langevin dynamics and is $\mathcal{O}\left(\frac{1}{\eps}\right)$ for the derivative-free dynamics.}
    \label{fig:time}
\end{figure}

\section{Concluding remarks}\label{sec:conclusion}

In conclusion, this work introduces and analyzes a class of Langevin-type Monte Carlo sampling algorithms with the novel use of a state-dependent adaptive diffusion term. The proposed method targets the Gibbs distribution on a bounded domain with nonconvex Gibbs potentials. Employing a novel state-dependent variance scheme enables efficient sampling even without requiring gradient information, making it particularly suitable for large-scale applications where gradient evaluation is costly. The study demonstrates that the derivative-free variant of the proposed algorithm can achieve faster convergence compared to traditional overdamped Langevin dynamics when dealing with complex distributions. The mean-exit time for the derivative-free dynamics does not depend on the energy gap between local minima and local maxima, but it is the case for overdamped Langevin dynamics, which results in slow mixing. Additionally, our work highlights the advantages of treating sampling algorithms as weighted Wasserstein gradient flows, and this unique perspective introduces new insights into the convergence behavior of these methods. The theoretical results are supported by numerical experiments, which illustrate the potential of this approach for efficient sampling in challenging high-dimensional settings.
\section*{Acknowledgment}
 
We would like to thank Dr.~Rafael Bailo (Eindhoven University of Technology) for pointing out the connection between the variable diffusion equation and the modified Wasserstein gradient flow. 
This work is partially supported by the National Science Foundation through grants DMS-2110895 (BE), DMS-1937254 (KR), DMS-2309802 (KR), and DMS-2409855 (YY), and by the Office of Naval Research  through grant
N00014-24-1-2088 (YY).


\appendix

\section{Weighted Poincar\'e Inequality}

One key component in our analysis is the weighted Poincar\'e inequality with a given weight function $w(\bx):\Omega \mapsto [0, \infty)$. To increase the readability of our proof, we recall here the inequality. The material here is standard and can be found in the references cited. For a general weight function $w(\bx)$,  we first introduce the Muckenhoupt
$A_p$ weights.
\begin{definition}[$A_p$ weights]\label{DEF:Ap}
For a fixed $1 < p < \infty$, we say that a weight function $w:\mathbb{R}^d \mapsto [0, \infty)$ belongs to the class $A_p$ if $w$ is locally integrable, and for all cubes $Q\subset \mathbb{R}^d$, we have
\begin{equation*}
[w]_p : = \sup_{Q\subset \mathbb{R}^d} \left({\frac {1}{V_Q }}\int _{Q}w (\bx)\,d\bx\right)\left({\frac {1}{V_Q}}\int _{Q}w (\bx)^{-{\frac {q}{p}}}\,d\bx\right)^{\frac {p}{q}} <\infty \,,
\end{equation*}
where $q$ is a real number such that $\frac{1}{p} + \frac{1}{q} = 1$, and $V_Q$ is the volume of the cube $Q$.
\end{definition}

The following weighted Poincar\'e inequality for weights in the $A_p$ class can be found in~\cite[Proposition~11.7]{ perez2019degenerate}.
\begin{theorem}[Weighted Poincar\'e inequality~\cite{perez2019degenerate}]\label{thm:wpi}
Let $w$ be an $A_p$ weight function and $f(\bx)$ a Lipschitz function. Then the following weighted Poincar\'e inequality holds for the hypercube $\Omega \subset \mathbb{R}^d$:
\begin{equation} \label{eq:wpi}
\frac{1}{w(\Omega)} \int_\Omega |f-f_{\Omega,w}|^p  w\, d\bx \leq\frac{ 2^p }{w(\Omega)} \int_\Omega |f-f_{\Omega}|^p  w\, dx \leq  \frac{C_d^p\, \ell_\Omega^p\, [w]_p\, }{w(\Omega)} \int_\Omega |\nabla f|^p w\, d\bx\,,
\end{equation} 
where $w(\Omega) = \dint_\Omega w(\bx) d\bx$,  $f_{\Omega,w} = \frac{1}{w(\Omega)} \dint_\Omega f(\bx) w(\bx) d\bx$, $f_{\Omega} = \frac{1}{V_\Omega} \dint_\Omega f(\bx) d\bx$, $\ell_\Omega$ is the side length of the cube $\Omega$, and $C_d$ is a dimensional constant.  
\end{theorem}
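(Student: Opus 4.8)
The plan is to establish the two inequalities in \eqref{eq:wpi} separately: the first is a soft consequence of the triangle inequality, while the second carries all of the analytic content. For the first, note that when $p\neq 2$ the weighted mean $f_{\Omega,w}$ is \emph{not} the $L^p(\Omega,w)$-optimal constant approximation of $f$, so I would not use any minimality; instead, for an arbitrary constant $c$ the triangle inequality gives $\|f-f_{\Omega,w}\|_{L^p(w)}\le\|f-c\|_{L^p(w)}+\|c-f_{\Omega,w}\|_{L^p(w)}$, and since $c-f_{\Omega,w}=\tfrac{1}{w(\Omega)}\int_\Omega(c-f)\,w$, Hölder (equivalently Jensen) bounds $\|c-f_{\Omega,w}\|_{L^p(w)}=|c-f_{\Omega,w}|\,w(\Omega)^{1/p}$ by $\|f-c\|_{L^p(w)}$; taking $c=f_\Omega$ produces $\|f-f_{\Omega,w}\|_{L^p(w)}\le 2\|f-f_\Omega\|_{L^p(w)}$, i.e.\ the constant $2^p$.

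For the second inequality, after absorbing harmless dimensional constants it suffices to prove $\int_\Omega|f-f_\Omega|^p\,w\le C_d^p\,\ell_\Omega^p\,[w]_p\int_\Omega|\nabla f|^p\,w$. The first step is a \emph{subrepresentation formula}: convexity of the cube gives, for smooth $f$ and $x\in\Omega$, $f(x)-f_\Omega=\tfrac{1}{|\Omega|}\int_\Omega\int_0^1\nabla f\big(y+t(x-y)\big)\cdot(x-y)\,dt\,dy$, and after the change of variables $z=y+t(x-y)$ and using $|x-y|\le\sqrt d\,\ell_\Omega$ this yields the pointwise estimate $|f(x)-f_\Omega|\le C_d\int_\Omega|x-y|^{1-d}|\nabla f(y)|\,dy=:C_d\,(Tg)(x)$ with $g=|\nabla f|$, where $T$ is a truncated Riesz potential of order one; the Lipschitz case follows by applying this to mollifications $f_\varepsilon$ and passing to the limit, using $\nabla f_\varepsilon\to\nabla f$ boundedly a.e.\ on $\Omega$ and $w\in L^1_{\mathrm{loc}}$. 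The second step is the weighted bound $\|Tg\|_{L^p(\Omega,w)}\le C_d\,[w]_p^{1/p}\,\ell_\Omega\,\|g\|_{L^p(\Omega,w)}$. To prove it I would split the kernel over the dyadic annuli $2^{-j-1}\sqrt d\,\ell_\Omega<|x-y|\le 2^{-j}\sqrt d\,\ell_\Omega$, so that $Tg(x)\lesssim\ell_\Omega\sum_{j\ge0}2^{-j}\langle g\mathbf 1_\Omega\rangle_{Q_j(x)}$ with $Q_j(x)$ the cube about $x$ of side $\sim 2^{-j}\ell_\Omega$ and $\langle\cdot\rangle_Q$ the Lebesgue average; then write $g=(g\,w^{1/p})\,w^{-1/p}$ and apply Hölder on each $Q_j(x)$, using the defining inequality of $[w]_p$ to absorb $\langle w^{-p'/p}\rangle_{Q_j}^{1/p'}$ into $[w]_p^{1/p}\langle w\rangle_{Q_j}^{-1/p}$; finally a Minkowski step in $L^p(\Omega,w)$ followed by Fubini reduces, at each fixed scale $j$, to the elementary inequality $\int_\Omega\Big(\tfrac{1}{w(Q_j(x))}\int_{Q_j(x)}g^p w\Big)w(x)\,dx\lesssim\int_\Omega g^p w$, whereupon the geometric series in $j$ sums.

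The step I expect to be the main obstacle is obtaining the \emph{linear} dependence on $[w]_p$ claimed in \eqref{eq:wpi}. A crude execution of the second step — dominating $Tg$ by the Hardy--Littlewood maximal function and invoking Buckley's weighted bound — yields only $[w]_p^{1/(p-1)}$, and dominating by the \emph{weighted} maximal operator does not even close, since that operator is merely of weak type $(1,1)$ on $L^1(\Omega,w)$; moreover, the Fubini reduction above naively costs one additional factor of the doubling constant of $w$, which is itself comparable to $[w]_p$. Extracting exactly one power of $[w]_p$ requires arranging the Hölder and $A_p$ applications so that the $A_p$ data is used only once across the full dyadic tower and controlling the residual doubling through the smaller $A_\infty$ (Fujii--Wilson) constant rather than $[w]_p$. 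This quantitative bookkeeping is precisely what is carried out in \cite[Proposition~11.7]{perez2019degenerate}, which I would invoke for the sharp constant; the geometric and functional-analytic skeleton described above is what makes that argument run.
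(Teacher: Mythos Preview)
The paper does not prove this theorem at all: it is stated in the Appendix as a quoted result from \cite[Proposition~11.7]{perez2019degenerate}, with no argument given, and the surrounding text simply refers the reader to that paper for both the result and its quantitative constant. So there is nothing to compare your proposal \emph{against}; you have written considerably more than the paper does.

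That said, your sketch is sound and tracks the actual argument in P\'erez--Rela. The first inequality via the triangle inequality plus Jensen is exactly right and yields the constant $2^p$. For the second inequality, the subrepresentation formula reducing to a fractional integral $I_1$ followed by a weighted $L^p$ bound on the truncated Riesz potential is the standard skeleton, and you have correctly identified the crux: a naive route through the maximal function and Buckley's bound gives only $[w]_p^{1/(p-1)}$, and extracting the \emph{linear} dependence on $[w]_p$ requires the more careful accounting (one use of the $A_p$ condition across all dyadic scales, with the residual overlap controlled by the smaller $A_\infty$ constant) that is precisely the content of \cite[Proposition~11.7]{perez2019degenerate}. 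Since you end by invoking that same proposition for the sharp constant, your proposal is in effect a fleshed-out version of the citation the paper makes, and there is no gap.
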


There have been many results on the weighted Poincar\'e inequality~\cite{fabes1982local,heinonen2018nonlinear}. The paper by P\'erez and Rela~\cite{perez2019degenerate} improved some of the classical results and produced a quantitative control of the Poincar\'e constant (see~\eqref{eq:wpi}) in the inequality, which is crucial for the analysis of our algorithm. We refer interested readers to~\cite{heinonen2018nonlinear,perez2019degenerate} for more general weighted Poincar\'e and Poincar\'e--Sobolev inequalities in various settings.

\begin{remark}
The definition of the $A_p$ class allows one to consider degenerate and singular weights. For example, let $w(\bx) = |\bx|^{\eta}$, $\bx\in\bbR^d$. Then $w\in A_p$ if and only if $-d< \eta < d(p-1)$.
\end{remark}

\end{document}